\theoremstyle{plain} 
\newtheorem{theorem}{Theorem}[section]      
\newtheorem{lemma}[theorem]{Lemma}          
\theoremstyle{definition} 
\newtheorem{assumption}[theorem]{Assumption}
\theoremstyle{remark} 
 \newcommand{\ind}{\perp\!\!\!\!\perp} 
\def\eqref#1{equation~\ref{#1}}
\def\1{\bm{1}}
\def\rs{{\textnormal{s}}}
\def\ru{{\textnormal{u}}}
\def\rx{{\textnormal{x}}}
\def\rz{{\textnormal{z}}}
\def\rvu{{\mathbf{i}}}
\def\rvs{{\mathbf{s}}}
\def\rvu{{\mathbf{u}}}
\def\rvw{{\mathbf{w}}}
\def\rvx{{\mathbf{x}}}
\def\rvy{{\mathbf{y}}}
\def\rvz{{\mathbf{z}}}
\def\rmS{{\mathbf{S}}}
\def\rmU{{\mathbf{U}}}
\def\rmX{{\mathbf{X}}}
\def\rmZ{{\mathbf{Z}}}
\def\vf{{\bm{f}}}
\def\vg{{\bm{g}}}
\def\vh{{\bm{h}}}
\def\vl{{\bm{l}}}
\def\vu{{\bm{u}}}
\def\vv{{\bm{v}}}
\def\vx{{\bm{x}}}
\def\vy{{\bm{y}}}
\def\vz{{\bm{z}}}
\def\mI{{\bm{I}}}
\def\mM{{\bm{M}}}
\def\mW{{\bm{W}}}
\def\mX{{\bm{X}}}
\def\mZ{{\bm{Z}}}
\DeclareMathAlphabet{\mathsfit}{\encodingdefault}{\sfdefault}{m}{sl}
\SetMathAlphabet{\mathsfit}{bold}{\encodingdefault}{\sfdefault}{bx}{n}
\def\sR{{\mathbb{R}}}
\newcommand{\KL}{D_{\mathrm{KL}}}
\DeclareMathOperator*{\argmin}{arg\,min}
\begin{document}

%
\runningtitle{Self-sufficient Independent Component Analysis}

%

\twocolumn[

\aistatstitle{Self-sufficient Independent Component Analysis \\ via KL Minimizing Flows}

\aistatsauthor{Song Liu}

\aistatsaddress{University of Bristol} 
]

\begin{abstract}
We study the problem of learning disentangled signals from data using non-linear Independent Component Analysis (ICA).  
Motivated by advances in self-supervised learning, we propose to learn self-sufficient signals:
A recovered signal should be able to reconstruct a missing value of its own from all remaining components without relying on any other signals. We formulate this problem as the minimization of a conditional KL divergence. 
Compared to traditional maximum likelihood estimation, our algorithm is prior-free and likelihood-free, meaning that we do not need to impose any prior on the original signals or any observational model, which often restricts the model's flexibility.   
To tackle the KL divergence minimization problem, 
we propose a sequential algorithm that reduces the KL divergence and learns an optimal de-mixing flow model at each iteration. This approach completely avoids the unstable adversarial training, a common issue in minimizing the KL divergence. 
Experiments on toy and real-world datasets show the effectiveness of our method. 
\end{abstract}

\section{INTRODUCTION}
Learning disentangled features from the data has been extensively studied in recent years \citep{dinh_nice_2015,chen_infogan_2016,hyvarinen_nonlinear_2023,Wang2024}. Assuming the observed signals are independent components transformed by an invertible mixing function, Independent Component Analysis (ICA) learns a de-mixing function that recovers the original independent signals. Linear ICA assumes the mixing function is linear \citep{amari_new_1995,bell_information-maximization_1995}. However, extending linear ICA to a non-linear setting is non-trivial, as the independence assumption alone is not enough to identify the original signals. 

Many methods have been proposed to enhance the identifiability by exploiting additional assumptions, such as autocorrelation \citep{hyvarinen_nonlinear_2017} or non-stationarity \citep{hyvarinen_unsupervised_2016} in the original signal, and conditional independence given an observed auxiliary variable \citep{hyvarinen_nonlinear_2019,khemakhem_variational_nodate}. 

In this paper, we employ two of the latest ideas in representation learning and generative modelling to develop a novel ICA algorithm. Self-supervised learning learns a representation by masking partial data and trains a model to recover masked values with the remaining components \citep{he_masked_2022,shi2022adversarial,tashiro2021csdi}. This idea inspires us to propose Self-sufficient Independent Component Analysis (SICA). We assume that the original signals are self-sufficient, i.e., we can reconstruct a missing value from the remaining components without relying on the other signals. We demonstrate that this sufficiency assumption naturally translates into a factorization condition for densities, which can be leveraged to train a de-mixing function. 

Existing ICA methods train the de-mixing function using maximum likelihood estimation or maximising the ELBO. However, these methods require explicit prior and likelihood models. Overly complicated models make the likelihood function intractable, while simple ones restrict the model's flexibility. The other branch of methods minimizes the mutual information between recovered signals. However, estimating the mutual information is not a trivial task, and existing methods often require adversarial training \citep{brakel_learning_2017} of a neural mutual information estimator \citep{belghazi_mutual_2018} together with the de-mixing function, which can be numerically unstable. 

In recent years, ODE-based generative models have achieved remarkable success \citep{chen_neural_2018,lipman_flow_2023,liu_flow_2022,yi23mono}. These methods train Ordinary Differential Equations (ODEs) to transport samples from a reference distribution to match a target distribution. 
Inspired by this idea, we train ODEs as de-mixing functions (referred to as ``de-mixing flows''). 
We learn an optimal de-mixing flow by enforcing the sufficiency mentioned above on the learned signals, resulting in a KL divergence minimization problem. 
To avoid adversarial training, we propose learning the de-mixing flow iteratively, thereby reducing the KL divergence at each iteration. 
Our approach does not impose any prior or likelihood model assumption, thus it is more flexible than likelihood-based approaches. 

This paper is organized as follows: Section \ref{sec.ass} clarifies the common independence assumptions used by classic ICAs, introduces our new SICA assumptions, and discusses the density factorization they imply. 
In Section \ref{sec.learn.demixing}, we detail under SICA assumptions, how to learn de-mixing functions using flow-based methods. Finally, in Section \ref{sec.exp}, we demonstrate that the proposed ICA algorithm achieves promising performance on both autoregressive and image datasets. 



\section{ICA AND SELF-SUFFICIENT ICA}
\label{sec.ass}

\textbf{Notations}: $x, \vx, \mX$ are scalar, vector and matrix. $\rx, \rvx, \rmX$ are random scalars, vectors and matrices. $p(\rx)$ is the probability of random variable $\rx$ and $p(\rx = x)$ is the probability density function of random variable $\rx$ evaluated at $x$. $\mX_{i, :}$ represents the $i$-th row of a matrix $\mX$. $\mX_{-i, :}$ represents all rows of $\mX$ except the $i$-th row. $\tilde{\mX}_{-i, :}$ represents all rows of a matrix $\mX$ but the $i$-th row is replaced with a vector of missing values $\mathrm{NaN}$. 

\subsection{Independent Component Analysis}
The task of Independent Component Analysis (ICA)
assumes that some original signal $\rvs \in \sR^d$ is transformed through an invertible, possibly non-linear mixing function $\vf: \sR^d \to \sR^d$: 
\begin{align}
  \rvx = \vf(\rvs), 
\end{align}
where $\rvs = [\rs_1, \dots, \rs_d]$ is a vector of \emph{independent components} and $\rvx$ is the observed signal. 
The task of ICA is to learn a de-mixing function $\vg: \sR^d \to \sR^d$ such that \begin{align}
  \rvz := \vg(\rvx).
\end{align} 
The optimal $\vg$ recovers the original signal, so $\rvz = \rvs$. If we restrict $\vg$ in the family of bijective linear functions, we recover the linear ICA problem. 

Most ICA algorithms look for a $\vg$ such that all components in $\rvz$ are statistically independent. This is achieved mainly in two ways: maximum likelihood estimation or direct minimization of mutual information \citep{hyvarinen_independent_2000}. For example, 
Nonlinear Independent Component Estimation (NICE) \citep{dinh_nice_2015} finds $\vg$ by maximizing the likelihood function constructed using \emph{independent priors}. 
Least-squares ICA (LICA) \citep{suzuki_least-squares_2011} finds $\vg$ by minimizing the least-squares mutual information (or total correlation) between components of $\rvz$, i.e., 
$ \mathrm{D} \left[ p(\rvz) \Big\Vert \prod_{i=1}^{d} p(\rz_{i}) \right]. $
LICA approximates the mutual information using density ratio estimation \citep{sugiyama_density_2012}. 
Non-linear Adversarial ICA \citep{brakel_learning_2017} formulates the mutual information minimization problem as an adversarial training problem, and learns a non-linear transform $\vg$.   

\subsection{Sufficiently Independent Component Analysis}
The independence assumption in the classic ICA can be overly stringent. Consider the observed signal $\rvx$ generated using the following formula:
\begin{align}
\label{eq.example}
 \ru &\sim \mathrm{uniform}(0, 2\pi),  
 \rs_2 :=h_1(\ru) + \epsilon, 
 \rs_1 := h_2(\ru) + \epsilon', \notag \\
 \rvx &= \begin{bmatrix}
    1,& .5 \\
    .5,& 1
 \end{bmatrix} 
 \begin{bmatrix}
     \rs_1\\
     \rs_2
 \end{bmatrix}
\end{align}
where $\epsilon, \epsilon'$ are some random independent noise. $\rvx$ is a linear mixture of $\rs_1$ and $\rs_2$. ICA cannot recover  $\rs_1$ and $\rs_2$ perfectly using $\rvx$ alone, as the independence assumption in ICA does not hold. $\rs_1$ and $\rs_2$ are not independent due to their associations with $\ru$.  See Figure \ref{fig:illus.example} for an example of such type of dataset. 
\begin{figure*}[t]
    \centering
    \includegraphics[width=0.95\textwidth]{./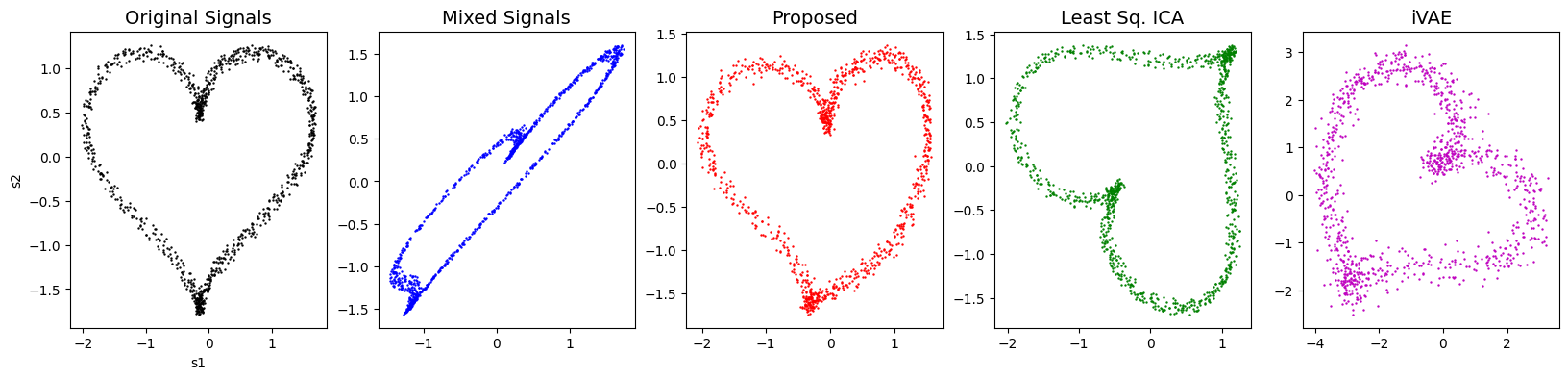}
    \caption{Two dependent signals mixed by a linear mixing function. In this case, both a linear ICA (LICA, \cite{suzuki_least-squares_2011}) and a non-linear ICA (iVAE \cite{khemakhem_variational_nodate}) fail to de-mix the signals (the heart is still tilted), whereas the proposed method, SICA, successfully de-mixes the two signals. }
    \label{fig:illus.example}
\end{figure*}

To address this issue, \citeauthor{hyvarinen_nonlinear_2019} proposes auxiliary variable ICA to generalise the independence assumption. In the above case, although $\rs_1$ and $\rs_2$ are not independent, they are conditionally independent given $\ru$. ICA with auxiliary variable assumes that $\rvs = [\rs_1, \dots, \rs_d]$ are conditionally independent given an observed auxiliary variable $\rvu$. This work has been further developed to handle structural temporal dependency in $\vu$ \citep{halva_disentangling_2021}. 
However, auxiliary variable ICA requires observing the common variable $\vu$. In many cases, such auxiliary information is difficult to obtain. 


\begin{figure*}
    \centering
\includegraphics[width=0.47\textwidth]{./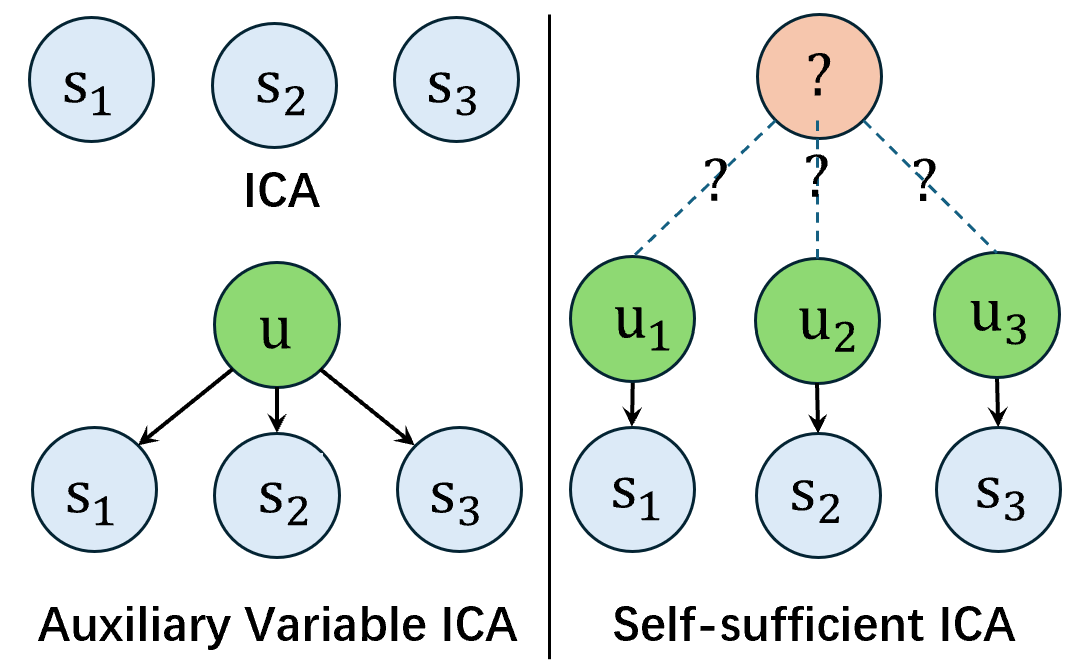}
\includegraphics[width=0.47\linewidth]{./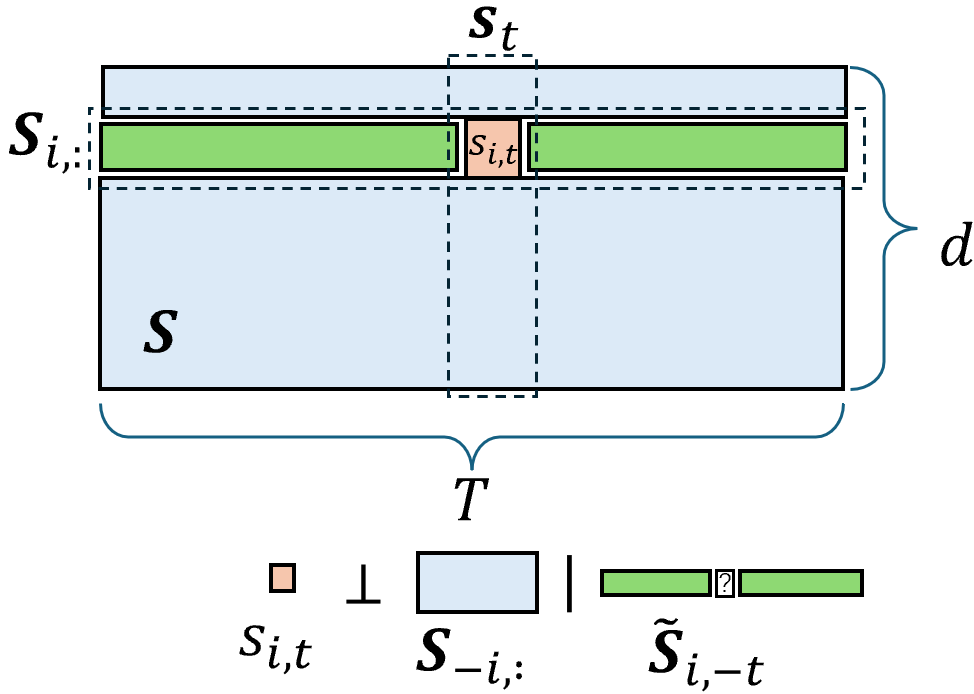}
    \caption{Left: Graphical models of ICA, Auxiliary Variable ICA and SICA. Right: SICA assumption on Sequence data. }
    \label{fig:placeholder}
\end{figure*}
In this paper, we study a specific setting of auxiliary-variable ICA. 
We assume that each component $\rs_i$ comes with its own auxiliary variable, $\rvu_i$, that is \emph{sufficient} in predicting $\rs_i$. By sufficiency, we mean that knowing information from any other pair $(\rs_j, \rvu_j), j \neq i$ will not improve the prediction of $\rs_i$. 
For example, if $\rs_0$ is one pixel on an image and $\rvu_0$ is the remaining pixels on the same image, then we assume that knowing the remaining pixels $\rvu_0$ is sufficient in predicting $\rs_0$. Information from the other image $(\rs_1, \rvu_1)$ can not help us predict $\rs_0$ better. 

Specifically, 
let pairs of signals and auxiliary variables be $(\rs_i, \rvu_i)$. We assume that 
\begin{align}
    \label{eq.aica.independnece}
    \rs_i \ind ({\rvs_{- i}, \rvu_{- i}})| \ru_i.  
\end{align}
The independence assumption in \eqref{eq.aica.independnece} is equivalent to the following density factorization  
\begin{align}
    \label{eq.aica}
    p(\rvs | \rmU) = \prod_i p(\rs_i | \ru_i),
\end{align}
where $\rmU = [\rvu_1, \dots, \rvu_d]^\top$. The proof involves applying the chain rule for densities; details can be found in Appendix ~\ref{app:proof_aica}.
One can see that this condition is stronger than the auxiliary variable ICA: The factorization in \eqref{eq.aica} implies 
the conditional independence of $\rs_i$ and $\rvs_{-i}$ given $\rvu$, 
but not vice versa. 
Moreover, this assumption is weaker than the unconditional independence, as $(\rs_i, \ru_i) \ind (\rvs_{-i}, \rvu_{-i})$ implies 
\eqref{eq.aica.independnece}, but not vice versa. 

We refer to this property as ``self-sufficiency'', and call the algorithm that recovers these sufficiently independent signals as Self-sufficient Independent Component Analysis (SICA). We compare the ICA, Auxiliary Variable ICA, and SICA assumptions in the left illustration in Figure \ref{fig:placeholder}. 

The next section shows how this assumption manifests in multi-dimensional sequence data.



\subsection{Sequence SICA}
\label{sec.ssica}
Consider a random, multi-dimensional signal $\rvs_t$, indexed by $t \in [T]$. 
We define 
$\rmS := [\rvs_t]_{t \in [T]} \in \sR^{d \times T},$ where $\rvs_t= [\rs_{1,t}, \dots, \rs_{d, t}]^\top$ is the $t$-th column of $\rmS$. 

Let us define the auxiliary variable 
$$
\rvu_t = \tilde{\rmS}_{:, - t} :=[\rvs_1 \dots  \rvs_{t-1}, \mathrm{NaN}, \rvs_{t+1}, \dots, \rvs_{T}]. 
$$
$\tilde{\rmS}_{:, - t}$ is the same as $\rmS$ except its $t$-th column is replaced by a vector of missing values. Then SICA assumption described in \eqref{eq.aica.independnece} can be expressed as 
\begin{align}
\label{eq:sica:independence}
 \rs_{i,t} \ind \rmS_{-i, :} | \tilde{\rmS}_{i, - t}, t \in [T]. 
\end{align}
It means at time $t$, given the remainder of $i$-th signal and the position of the missing value, the $i$-th signal is independent from any other signals at any time. 
See the right plot in Figure \ref{fig:placeholder} for a visualization of \eqref{eq:sica:independence}.

Equivalently, we can express the assumption as factorizations of density functions, as shown in \eqref{eq.aica}
\begin{align}
\label{eq.sica}
p(\rvs_t\vert \tilde{\rmS}_{:, - t}) = \prod_{i=1}^{d} p\left(\rs_{i,t} \vert  \tilde{\rmS}_{i, - t}\right), t \in [T]. 
\end{align}

Following the standard ICA setting, 
we assume that, 
\[\rvx_t = \vf\left(\rvs_t\right), t \in [T]\] 
and we want to learn $\vg$ such that 
\[\rvz_t := \vg(\rvx_t), t \in [T].\] $\vg$ is optimal iff $\rvz_t = \rvs_t, \forall t$. 

One can easily extend $t$ to multi-dimensional indices, which enables applications in image or video processing and spatial data analysis. In the rest of the paper, we focus on SICA applied to sequence data. 

\section{LEARNING DE-MIXING FLOWS}
\label{sec.learn.demixing}



In this section, we propose numerical methods that learns de-mixing flows. 

Let $\rmZ = [\rvz_t, \forall t]. $
The SICA assumption in \eqref{eq.sica} suggests a learning criterion of 
minimizing the KL divergence 
\begin{align}
    \label{eq.obj.original}
    \vg^* := \argmin_\vg \sum_{t\in [T]} \KL \left[ p(\rvz_t\vert \tilde{\rmZ}_{:, - t}) \Big\Vert \prod_{i=1}^{d} p\left(\rz_{i,t} \vert  \tilde{\rmZ}_{i, - t}\right) \right],
\end{align} 
where $\tilde{\rmZ}_{:, - t}$ is similarly defined as $\tilde{\rmS}_{:, - t}$. 
This objective enforces the factorization of conditional distributions of the recovered signal $\rmZ$. 
One may adopt the adversarial training scheme, similar to \citep{brakel_learning_2017}, to
minimize the divergence. However, adversarial optimization is known to be numerically challenging. 


\begin{algorithm}[t]
\caption{Iterative KL minimization Algorithm}
\label{alg:gcd}
\begin{algorithmic}[1] 
\State Input: A mixed multi-dimensional sequence $\rmX$
\State Let $\rmZ^{(0)} = \rmX$
   \For{$j = 1 \dots J$}
      \State $\vl^{(j)} := \argmin_{\vl \in \mathcal{L}} \sum_{t\in [T]} \KL^{(j-1)}(\vl)$ 
      \State $ \rmZ^{(j)} := \vl^{(j)} (\rmZ^{(j-1)})$ 
   \EndFor
   \State \Return $\rmZ^{(J)}$
\end{algorithmic}
\end{algorithm}

Instead of directly minimize \eqref{eq.obj.original}, we propose  an iterative approach to refine $\rvz_t$ gradually. At iteration $j$, 
we learn a refinement function $\vl^{(j)}$ by minimizing the following surrogate objective, 
\begin{align}
    \label{eq.kl.iterative}
    \vl^{(j)} &:= \argmin_{\vl \in \mathcal{L}} \sum_{t\in [T]} \KL^{(j-1)}(\vl) \notag \\
    \KL^{(j-1)}(\vl) &:= \KL \left[ p({\color{red}{\rvz_t}} \vert \tilde{\rmZ}^{(j-1)}_{:, - t}) \Vert \prod_{i=1}^{d} p\left(\rvz_{i, t}^{(j-1)} \vert  \tilde{\rmZ}_{i, - t}^{(j-1)}\right) \right],  
\end{align}
where we define 
\begin{align}
    {\color{red}{\rvz_t}} := \vl( \rvz_t^{(j-1)};\tilde{\rmZ}^{(j-1)}_{:, -t}). 
\end{align}
The refinement function $\vl$ is invertible with respect to the first argument. 
Notice that \eqref{eq.kl.iterative} is similar to \eqref{eq.obj.original}, but only $\rvz_t$ depends on the refinement function $\vl$. Other random variables are carried over from the $j-1$-th iteration. 

Isolating the random variables that depend on $\vl$ is a crucial step toward deriving a tractable minimization algorithm of the KL divergence in \eqref{eq.obj.original}, as we will see in the next section.  
The detailed algorithm is provided in Algorithm \ref{alg:gcd}. 

Finally, after the algorithm terminates, we obtain $\vg := \bigcirc_{j = 1}^{J} \vl^{(j)}$, i.e., the de-mixing function $\vg$ that is the composite of all previously learned refinements. 

We prove that Algorithm \ref{alg:gcd} reduces the KL divergence over iterations: 
\begin{theorem}
\label{thm.kl.reducing}
If $\vl$ is invertible as defined above, the KL divergence $\KL^{(j)}\left(\vl^{(j)}\right)$ at the end of each iteration is non-increasing, i.e., 
    $$\KL^{(j)}\left(\vl^{(j)}\right) \le \KL^{(j-1)}\left(\vl^{(j-1)}\right), \forall j = 1 \dots J. $$
\end{theorem}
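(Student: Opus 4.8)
The plan is to recognize Algorithm~\ref{alg:gcd} as a majorize--minimize (bound-optimization) scheme: I would exhibit the iteration-$j$ surrogate $\KL^{(j-1)}(\vl)$ as an upper bound on the genuine factorization objective of the refined signal that is tight at the identity refinement. The whole argument runs on one elementary fact about the KL divergence to a product, applied conditionally on the carried-over context.

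First I would record the marginal-matching identity: for any joint density $p$ on $\sR^d$ with marginals $p_i$ and any product $\prod_i q_i$,
$$\KL\left[p \,\Big\Vert\, \prod_{i=1}^d q_i\right] = \KL\left[p \,\Big\Vert\, \prod_{i=1}^d p_i\right] + \sum_{i=1}^d \KL\left[p_i \,\Vert\, q_i\right] \;\ge\; \KL\left[p \,\Big\Vert\, \prod_{i=1}^d p_i\right],$$
with equality iff $q_i = p_i$ for every $i$. Applying this conditionally on $\tilde{\rmZ}^{(j-1)}_{:,-t}$, with $p$ the conditional law of the refined column $\rvz_t = \vl(\rvz_t^{(j-1)};\,\cdot\,)$ and $q_i$ the carried-over conditional marginals $p(\rz_{i,t}^{(j-1)} \vert \tilde{\rmZ}^{(j-1)}_{i,-t})$, shows that each summand of $\KL^{(j-1)}(\vl)$ dominates the conditional total correlation of the refined signal and that the two agree exactly when the refined marginals coincide with the carried-over ones --- in particular at $\vl=\mathrm{id}$.

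Second, I would combine feasibility of the identity refinement, $\mathrm{id}\in\mathcal{L}$, with optimality of $\vl^{(j)} = \argmin_{\vl\in\mathcal{L}}\KL^{(j-1)}(\vl)$ to obtain $\KL^{(j-1)}(\vl^{(j)}) \le \KL^{(j-1)}(\mathrm{id})$. By the previous step, the right-hand side is precisely the factorization objective evaluated at the incoming iterate $\rmZ^{(j-1)}$, i.e.\ the value carried out of iteration $j-1$, while the left-hand side is the value attained at iteration $j$. This already telescopes into the claimed non-increasing inequality, \emph{provided} the surrogate value at $\vl^{(j)}$ can be identified with the factorization objective at the updated iterate $\rmZ^{(j)}$.

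That identification is where invertibility enters, and I expect it to be the main obstacle. The subtlety is that the surrogate conditions the refined column on the \emph{frozen} context $\tilde{\rmZ}^{(j-1)}_{:,-t}$ and compares it against frozen marginals, whereas the genuine objective $\sum_t \KL[\,p(\rvz_t \vert \tilde{\rmZ}_{:,-t}) \Vert \prod_i p(\rz_{i,t} \vert \tilde{\rmZ}_{i,-t})\,]$ at the new state conditions on the updated neighbours $\tilde{\rmZ}^{(j)}_{:,-t}$. Because $\vl$ is a bijection in its first argument for each fixed context, a column-wise change of variables leaves the conditional divergences invariant, and the crux is to argue that conditioning on the transformed neighbours carries the same information as conditioning on the original ones, so that the two conditional laws --- and hence the two KL values --- coincide. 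I would make this precise with the invertibility hypothesis and the chain rule for densities already invoked in Appendix~\ref{app:proof_aica}. Granting it, the chain $\KL^{(j)}(\vl^{(j)}) \le \KL^{(j-1)}(\mathrm{id}) = \KL^{(j-1)}(\vl^{(j-1)})$ closes the induction and yields the stated monotonicity.
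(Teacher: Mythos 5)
Your high-level plan coincides with the paper's proof: the paper splits $\KL^{(j)}(\vl^{(j)})-\KL^{(j-1)}(\vl^{(j-1)})$ into $B=\KL^{(j-1)}(\vl^{(j)})-\KL^{(j-1)}(\vl^{(j-1)})\le 0$, which is your optimality-plus-feasibility step (and $\KL^{(j-1)}(\mathrm{id})$ equalling the objective at $\rmZ^{(j-1)}$ is purely definitional --- no marginal-matching identity is needed there), and $A=\KL^{(j)}(\vl^{(j)})-\KL^{(j-1)}(\vl^{(j)})\le 0$. Your handling of $B$ is fine; the gap is in $A$. The marginal-matching identity, applied conditionally on the full context $\tilde{\rmZ}^{(j-1)}_{:,-t}$, produces as its reference product the marginals $p(\rz_{i,t}\mid\tilde{\rmZ}^{(j-1)}_{:,-t})$ conditioned on the \emph{whole} matrix, whereas both the surrogate's reference $p(\rz^{(j-1)}_{i,t}\mid\tilde{\rmZ}^{(j-1)}_{i,-t})$ and the genuine objective's reference $p(\rz^{(j)}_{i,t}\mid\tilde{\rmZ}^{(j)}_{i,-t})$ condition only on row $i$. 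Your lower bound (the full-context conditional total correlation) therefore sits below \emph{both} $\KL^{(j-1)}(\vl^{(j)})$ and $\KL^{(j)}(\vl^{(j)})$ and cannot order them, and your tightness claim at $\vl=\mathrm{id}$ is false: the refined marginals coincide with the carried-over ones at the identity only if the SICA factorization already holds, which is the very property being optimized.

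The more serious problem is the closing assertion that invertibility makes ``the two KL values coincide,'' i.e.\ $\KL^{(j)}(\vl^{(j)})=\KL^{(j-1)}(\vl^{(j)})$. Invertibility only licenses replacing the conditioning context, $p(\rvz^{(j)}_t\mid\tilde{\rmZ}^{(j-1)}_{:,-t})=p(\rvz^{(j)}_t\mid\tilde{\rmZ}^{(j)}_{:,-t})$; it does not touch the \emph{second} argument of the KL, which changes from the density of the old variable $\rz^{(j-1)}_{i,t}$ to that of the new variable $\rz^{(j)}_{i,t}$. The two values are not equal in general; what holds is
\begin{align*}
\KL^{(j-1)}\!\left(\vl^{(j)}\right)-\KL^{(j)}\!\left(\vl^{(j)}\right)
= \sum_i \E\!\left[\log p\!\left(\rz^{(j)}_{i,t}\mid\tilde{\rmZ}^{(j)}_{i,-t}\right)-\log p\!\left(\rz^{(j-1)}_{i,t}=\rz^{(j)}_{i,t}\mid\tilde{\rmZ}^{(j)}_{i,-t}\right)\right]\;\ge\;0,
\end{align*}
an entropy-versus-cross-entropy (Gibbs) inequality under the law of $\rz^{(j)}_{i,t}$ given $\tilde{\rmZ}^{(j)}_{i,-t}$. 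This Gibbs step is exactly the paper's term $A$ and is the substantive content of the proof; as written, your chain $\KL^{(j)}(\vl^{(j)})\le\KL^{(j-1)}(\mathrm{id})$ does not close without it.
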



The proof is included in the Appendix \ref{them.kl.reducing}. 

In this paper, 
we assume that we only observe one sample at each $t$. 
To help us build the de-mixing flow learning algorithm, we make the following assumption. 
\begin{assumption} 
\label{ass.iden}
\begin{align*}
  p\left(\rvz_t \vert \tilde{\rmZ}^{(j-1)}_{:, - t}\right) & =   p\left(\rvz_{t'} \vert \tilde{\rmZ}^{(j-1)}_{:, - {t'}}\right),  \\
  p\left(\rvz_{i, t}^{(j-1)} \vert  \tilde{\rmZ}_{i, - t}^{(j-1)}\right) & = p\left(\rvz_{i, t}^{(j-1)} \vert  \tilde{\rmZ}_{i, - t}^{(j-1)}\right), \forall t, t' \in [T]. 
\end{align*}
\end{assumption}
Assumption \ref{ass.iden} assumes that the above conditional distributions are identical over the entire time period. This stationary condition paves the way for using samples at different time points to learn a de-mixing flow as we will see in the next sections. 

\subsection{Minimizing KL using Wasserstein Gradient Flow}
\label{sec.min.kl.wgf}

As the solution described in this section applies to all $j$, we simplify notations $\rmZ^{(j-1)}, \rvz_t^{(j-1)}$ and $\rmZ^{(j-1)}_{:, t}$ as $\rmZ, \rvz_t$ and $\rmZ_{:, t}$ by omitting the superscripts. 

To optimize \eqref{eq.kl.iterative}, we need to choose the family $\mathcal{L}$. 
Inspired by the success of flow-based generative models, we construct the refinement function $\vl$ using an ODE.   
Let 
$\vl(\vy; \tilde{\mZ}_{:, -t})$ be the solution of an ODE using $\vy$ as the initial value. The ODE evolves according to the following differential equation 
\begin{align}
    \label{eq.ode.vv_t}
    \frac{\mathrm{d}}{\mathrm{d}\tau} \vy(\tau) = \vv(\vy(\tau),  \tilde{\mZ}_{:, -t}, \tau). 
\end{align}
Then, finding $\vl$ translates into the problem of finding the vector field $\vv$, and we show the vector field that minimises $\KL\left(\vl\right)$ is a Wasserstein Gradient Flow (WGF) \citep{ambrosio_gradient_2008}. 



Suppose $\rvy(\tau)$ is the solution of an ODE at time $\tau$, with a random variable $\rvy(0)$ as its initial value. 
Let $\rho_\tau$ be the density of $\rvy(\tau)$, a curve of probability densities. If $\rho_\tau$ evolves along the steepest descent direction of $\KL[\rho_\tau\Vert \mu]$, $\rho_\tau$ is a Wasserstein-2 gradient flow of the functional objective $\KL[\rho_\tau\Vert \mu]$. The velocity field of such a gradient flow takes a closed form: 
\begin{align}
\label{eq.kl.wgf.closedform}
    \vv^*(\vy) = - \nabla_\vy \log \frac{\rho_\tau(\vy)}{\mu(\vy)}. 
\end{align}
The proof of this result could be found in, e.g., Theorem 1.4.1,  \citep{chewi_log_2025}. 

Let $\rvz_{t}(\tau)$ be the solution to an ODE at time $\tau$. 
Recall the KL divergence we are minimising in \eqref{eq.kl.iterative} is 
\[\KL \left[ \underbrace{p(\rvz_t(\tau) \vert \tilde{\rmZ}_{:, - t})}_{\rho_\tau} \Vert \prod_{i=1}^{d} p\left(\rvz_{i, t} \vert  \tilde{\rmZ}_{i, - t}\right) \right].\]
According to \eqref{eq.kl.wgf.closedform}, the WGF vector field minimizes our KL divergence is 
\begin{align}
  \label{kl.wgf}
  \vv^*(\vy; \tilde{\mZ}_{:, -t}) &:= -\nabla_\vy \log \frac{\rho_\tau\left(\vy | \tilde{\mZ}_{:, -t} \right)}{\prod_i p_i\left(y_i | \tilde{\mZ}_{i,-t}\right)}, 
\end{align}
where 
\begin{align}
    \rho_\tau\left(\vy | \tilde{\mZ}_{:, -t} \right) &:= p\left(\rvz_{t}(\tau) = \vy | \tilde{\rmZ}_{:, -t} = \tilde{\mZ}_{:, -t} \right),\notag \\
  p_i\left(y_i | \tilde{\mZ}_{i, -t} \right) & := p\left(\rz_{i, t} = y_i | \tilde{\rmZ}_{i, -t} = \tilde{\mZ}_{i, -t} \right). \notag 
\end{align} 
and the vector field $\vv^*$ 
$$\vv^*: \mathbb{R}^{d} \bigotimes \tilde{\mathbb{R}}^{ T \times d}    \to \mathbb{R}^{d},$$ 
where $\tilde{\mathbb{R}} := \mathbb{R} \cup \{\mathrm{NaN}\}$. 
Although one may use any density ratio estimation techniques \citep{sugiyama_density_2012} to approximate the density ratio in \eqref{kl.wgf}, the ratio in \eqref{kl.wgf} is a conditional density ratio, and estimating the conditional ratio requires conditional samples from the numerator and the denominator distributions, which are not available in our setting. 

However, we can see that  
\begin{align}
    \label{eq.log.joint.ratio}
    \vv^*(\vy; \tilde{\mZ}_{:, -t}) &= -\nabla_\vy \log \frac{\rho_\tau\left(\vy, \tilde{\mZ}_{:, -t} \right)}{\prod_i p_i\left(y_i, \tilde{\mZ}_{i,-t}\right)}, 
\end{align}
where both the numerator and denominator are joint densities defined similarly as the above. 
This is because the gradient is only taken with respect to $\vy$, not the conditional variables. Hence, the gradient of the log conditional ratio is equivalent to the gradient of the log joint ratio. The joint ratio is much simpler to estimate, as it only requires joint samples that can be readily constructed from $\rmZ$.  

Let 
$(\rz'_{i,t}, \rmZ'_{i, -t})$ be an identical but independent copy of $(\rz_{i,t}, \rmZ_{i, -t})$. 
According to Assumption \ref{ass.iden}, the pairs are identically distributed across all $t$. 
Thus, we can construct sets of samples from both the numerator and the denominator, i.e., 
$\{(\rvz_t, \tilde{\rmZ}_{:, -t})\}_{t=1}^{T} \sim \rho_\tau$ and  $$\{(\rz'_{1,t}, \tilde{\rmZ}'_{1, -t})\}_{t=1}^{T} \sim p_1, \dots, \{(\rz'_{d,t}, \tilde{\rmZ}'_{d, -t})\}_{t=1}^{T}\sim p_d.$$ 
We can use these sets to estimate the joint ratio in \eqref{eq.log.joint.ratio} with any density ratio estimation method. 

In practice, we do not have access to independent copies of  $(\rz_{i,t}, \rmZ_{i, -t})$, so we create $\{(\rz'_{i, t}, \rmZ'_{i, -t})\}_{t=1}^{T}$ by permuting samples in $\{(\rz_{i, t}, \rmZ_{i, -t})\}_{t=1}^{T}$. 

After obtaining $\vv^*$, we solve the ODE in \eqref{eq.ode.vv_t} using the Euler method. In experiments, we observe that even one step of the Euler update can already work effectively.  

\subsection{Minimising KL using Rectified Flow}
On the other hand, if we do not require the trajectory of the density $\rho_\tau$ to take the steepest descent of the KL divergence, then any
ODE that transports samples from the initial distribution $p(\rvz_t| \tilde{\rmZ}_{:, -t})$ to the target $\prod_i p(\rz_{i,t}|  \tilde{\rmZ}_{i, -t})$ would be optimal as it reduces the KL divergence to zero. This relaxation opens the door to using other types of distribution matching flows, such as Rectified Flow \citep{liu_flow_2022}. 


Let $\rvy(0)$ and $\rvy(1)$ be two random variables. The rectified flow trains an ODE that transports samples from the reference $p(\rvy(0))$ to the target $p(\rvy(1))$, and the vector field is trained by minimizing the following least squares objective: 
\begin{align}
\label{eq.rectified}
    \vv^* := \argmin_\vv \int_0^1 \mathbb{E}[\|\rvy(1) - \rvy(0) - \vv(\rvy(\tau), \tau)\|^2] \mathrm{d}\tau, \notag \\
    \rvy(\tau)  = \tau \rvy(1) + \tau \rvy(0). 
\end{align}
One can prove that solving an ODE using $\vv^*$ as the vector field and samples of $\rvy(0)$ as initial points, will transport samples from $p(\rvy(0))$ to the target distribution $p(\rvy(1))$ (see Theorem 3.3 in \cite{liu_flow_2022}). 



Recall that our task is to transport samples from the conditional distribution $p(\rvz_t| \tilde{\rmZ}_{:, -t})$ to the target $\prod_i p(\rz_{i,t}|  \tilde{\rmZ}_{i, -t})$.  
To achieve this, we propose the following variant of the rectified flow objective. Define 
\begin{align*}
    \rvy(0) &= [\rvz_t, \tilde{\rmZ}_{:, -t}, \tilde{\rmZ}'_{:, -t}], \\
    \rvy(1) &= [\rvz_t', \tilde{\rmZ}_{:, -t}, \tilde{\rmZ}'_{:, -t}], \\
    \rvy(\tau) &= [\rvz_t(\tau), \tilde{\rmZ}_{:, -t}, \tilde{\rmZ}'_{:, -t}].   
\end{align*} 
where 
\begin{align*}
(\rvz'_t, \tilde{\rmZ}'_{:, -t}) := 
\left(\begin{bmatrix}
    z'_{1, t}, \\
    \dots, \\
    z'_{d, t}
\end{bmatrix}
,
\begin{bmatrix}
      \tilde{\rmZ}'_{1, -t} \\
      \dots \\
      \tilde{\rmZ}'_{d, -t}
  \end{bmatrix}  
  \right).
\end{align*}
Note that $\rvy(\tau)$ looks slightly differently from the one in \eqref{eq.rectified}, since 
\begin{align*}
 \tau \tilde{\rmZ}_{:, -t} + (1-\tau) \tilde{\rmZ}_{:, -t} &= \tilde{\rmZ}_{:, -t} \\  
 \tau \tilde{\rmZ}'_{:, -t} + (1-\tau) \tilde{\rmZ}'_{:, -t} &= \tilde{\rmZ}'_{:, -t}. 
\end{align*}

Rewriting the \eqref{eq.rectified} using the newly defined $\rvy(0)$, $\rvy(1)$ and $\rvy(\tau)$, we obtain 
\begin{align}
    \label{eq.rec.obj}
    \vv^* := \argmin_\vv \int_0^1 \mathbb{E}[\|\rvz_t' - \rvz_t - \vv(\rvz_t(\tau), \tilde{\rmZ}_{:, -t}, \tilde{\rmZ}'_{:, -t}, \tau)\|^2] \mathrm{d}\tau,
\end{align}
where the vector field 
$$\vv: \mathbb{R}^{d} \bigotimes \tilde{\mathbb{R}}^{ T \times d}\bigotimes \tilde{\mathbb{R}}^{ T \times d}\bigotimes\mathbb{R}  \to \mathbb{R}^{d}.$$ 

Finally, our refinement function is 
$$\vl(\vz_t; \tilde{\mZ}_{:, -t}) = \vy(0) + \int_{0}^1 \vv^*(\vy(\tau), \tilde{\mZ}_{:, -t}, \tilde{\mZ}_{:, -t}, \tau) \mathrm{d}{\tau}. $$ 
with $\vy(0)$ set to $\vz_t$. 
Similar to the previous section, we approximate it using the Euler method. Now, we prove that the above refinement function is optimal.  
\begin{algorithm*}[t] 
\caption{Iterative KL minimization Algorithm with WGF or RF}
\label{alg:gcd2}
\begin{algorithmic}[1] 
\State Input: A mixed multi-dimensional sequence $\rmX$, Choice of Flow: $\{\mathrm{WGF}, \mathrm{RF}\}$. 
\State Let $\rmZ^{(0)} = \rmX$
   \For{$j = 1 \dots J$}
   \State Construct dataset $\mathcal{D}:= \{(\rvz_t, \tilde{\rmZ}_{:, -t}^{(j-1)})\}_{t=1}^{T}$. 
       \For{$i = 1 \dots d$} \verb| //for each signal| 
            \State $\mathcal{D}_i' =  \{(\rz_{i, t'}, \tilde{\rmZ}_{i, -t'}^{(j-1)})\}_{t' \in T'}$, $T'$ is a random permutation of $\{1 \dots T\}$. 
       \EndFor
    \If{Choice of Flow is WGF}
        \State Estimate $\vv^*$ in \eqref{eq.log.joint.ratio} using $\mathcal{D}$ and $\mathcal{D}'_1, \dots \mathcal{D}'_d$. 
    \State For all $t \in [T]$, $\rvz_t^{(j)} = \rvz_t^{(j-1)} + \eta \vv^*(\rvz_t^{(j-1)}; \tilde{\rmZ}_{:, -t}^{(j-1)})$ \verb| // one step of Euler update|
    \Else 
     \State Estimate $\vv^*$ in \eqref{eq.rec.obj} using $\mathcal{D}$ and $\mathcal{D}'_1, \dots \mathcal{D}'_d$. 
    \State For all $t \in [T]$, $\rvz_t^{(j)} = \mathrm{ODESolve}(\text{init} = \rvz_t^{(j-1)}, \text{vector field} = \vv^*(\cdot; \tilde{\rmZ}_{:, -t}^{(j-1)}, \tilde{\mZ}_{:, -t}^{(j-1)}), \text{time} = 1)$. 
    \EndIf 
      \EndFor
   \State \Return $\rmZ^{(J)}$
\end{algorithmic}
\end{algorithm*}
\begin{theorem}
\label{thm.rectified.flow}
    $\vl(\cdot, \tilde{\rmZ}_{:, -t})$ transports $\rvz_t$ to the target distribution,  i.e., 
    \[p(\vl(\rvz_t, \tilde{\rmZ}_{:, -t}) | \tilde{\rmZ}_{:, -t}) = \prod_i p(\rz_{i,t} | \tilde{\rmZ}_{i, -t}). \]
\end{theorem}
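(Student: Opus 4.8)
The plan is to deduce the claim directly from the rectified-flow transport guarantee (Theorem 3.3 of \cite{liu_flow_2022}, quoted above) applied to the \emph{augmented} variable $\rvy(\tau) = [\rvz_t(\tau), \tilde{\rmZ}_{:, -t}, \tilde{\rmZ}'_{:, -t}]$, rather than re-deriving a transport result from scratch. The crucial structural observation is that the two conditioning blocks are carried as \emph{static} coordinates: since $\tau\,\tilde{\rmZ}_{:,-t} + (1-\tau)\,\tilde{\rmZ}_{:,-t} = \tilde{\rmZ}_{:,-t}$ (and likewise for the primed block), the linear interpolant is constant in those coordinates, so the optimal least-squares velocity $\mathbb{E}[\rvy(1)-\rvy(0)\mid \rvy(\tau)]$ vanishes there. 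Consequently the ODE driven by $\vv^*$ leaves $\tilde{\rmZ}_{:,-t}$ and $\tilde{\rmZ}'_{:,-t}$ fixed at their initial values for all $\tau$, and only the first block $\rvz_t(\tau)$ moves.

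With this in hand, the first step is to invoke the rectified-flow theorem on $\rvy$ to conclude that the ODE with field $\vv^*$ transports the joint law of $\rvy(0)$ to the joint law of $\rvy(1)$. The second step --- the key one --- is to upgrade this marginal statement to a conditional one. Because the conditioning coordinates are deterministically preserved by the flow, conditioning on their initial values is the same as conditioning on their terminal values; I would therefore fix $(\tilde{\rmZ}_{:,-t}, \tilde{\rmZ}'_{:,-t}) = (\tilde{\mZ}_{:,-t}, \tilde{\mZ}'_{:,-t})$ and read off that the ODE transports the conditional law $p(\rvz_t \mid \tilde{\mZ}_{:,-t}, \tilde{\mZ}'_{:,-t})$ to $p(\rvz_t' \mid \tilde{\mZ}_{:,-t}, \tilde{\mZ}'_{:,-t})$.

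The third step identifies these two conditional laws using the construction of the primed copy. By definition each pair $(\rz'_{i,t}, \tilde{\rmZ}'_{i,-t})$ is an independent copy of $(\rz_{i,t}, \tilde{\rmZ}_{i,-t})$, mutually independent across $i$ and independent of $(\rvz_t, \tilde{\rmZ}_{:,-t})$. Hence the input law satisfies $p(\rvz_t \mid \tilde{\mZ}_{:,-t}, \tilde{\mZ}'_{:,-t}) = p(\rvz_t \mid \tilde{\mZ}_{:,-t})$, matching the distribution actually fed into $\vl$, while the output law factorizes as $p(\rvz_t' \mid \tilde{\mZ}_{:,-t}, \tilde{\mZ}'_{:,-t}) = \prod_i p(\rz'_{i,t} \mid \tilde{\mZ}'_{i,-t})$. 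Finally, substituting $\tilde{\mZ}'_{:,-t} = \tilde{\mZ}_{:,-t}$ as in the definition of $\vl$ and using that the copies are identically distributed, $\prod_i p(\rz'_{i,t} \mid \tilde{\mZ}'_{i,-t} = \tilde{\mZ}_{i,-t}) = \prod_i p(\rz_{i,t} \mid \tilde{\mZ}_{i,-t})$, which is exactly the target and closes the argument.

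The step I expect to be the main obstacle is the passage from the marginal transport guarantee to the conditional statement, i.e.\ rigorously justifying that embedding the conditioning variables as static coordinates turns the rectified flow into a valid conditional transport and that evaluating $\vv^*$ with its second conditioning argument set equal to the first is legitimate. This is delicate precisely because during training the two blocks $\tilde{\rmZ}_{:,-t}$ and $\tilde{\rmZ}'_{:,-t}$ are independent, whereas at inference time they are tied together; the independence structure of the copies is what makes the substitution harmless, but making the conditioning argument fully rigorous requires care with regularity and measurability (e.g.\ disintegration of the joint law and almost-everywhere validity of the conditional ODE).
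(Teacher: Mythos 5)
Your proposal is correct and follows the same overall decomposition as the paper's proof: first establish that the flow transports $p(\rvz_t \mid \tilde{\rmZ}_{:,-t}, \tilde{\rmZ}'_{:,-t})$ to $p(\rvz'_t \mid \tilde{\rmZ}_{:,-t}, \tilde{\rmZ}'_{:,-t})$, then use the independence of the primed copy from $\tilde{\rmZ}_{:,-t}$ to factorize the target as $\prod_i p(\rz'_{i,t}\mid \tilde{\rmZ}'_{i,-t})$, and finally substitute $\tilde{\rmZ}'_{i,-t}=\tilde{\rmZ}_{i,-t}$ and invoke the identical distribution of the copies. The one place you diverge is in how the conditional transport statement is obtained: the paper proves a dedicated Lemma (\ref{lem2.2}) by re-running the continuity-equation / marginal-preservation argument with everything conditioned on $(\tilde{\rmZ}_{:,-t}, \tilde{\rmZ}'_{:,-t})$, whereas you propose to apply the unconditional Theorem 3.3 of \citet{liu_flow_2022} to the augmented variable $\rvy(\tau)=[\rvz_t(\tau),\tilde{\rmZ}_{:,-t},\tilde{\rmZ}'_{:,-t}]$, observe that the optimal velocity vanishes on the static coordinates, and then disintegrate the resulting joint-law equality to recover the conditional one. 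Both are valid; your augmentation route reuses the cited theorem off the shelf (and indeed the vanishing of $\mathbb{E}[\rvy(1)-\rvy(0)\mid\rvy(\tau)]$ on the conditioning blocks is exactly why the paper's $\vv^*$ only outputs a $d$-dimensional update), at the cost of the almost-everywhere/disintegration care you correctly flag; the paper's direct conditional derivation avoids that step but must restate the integration-by-parts argument. The obstacle you identify --- that the two conditioning blocks are independent at training time but tied together at inference --- is handled in the paper exactly as you suggest, via the independence of $\vl(\rvz_t,\tilde{\rmZ}_{:,-t},\tilde{\rmZ}'_{:,-t})$ from $\tilde{\rmZ}'_{:,-t}$ once the conditioning on $\tilde{\rmZ}_{:,-t}$ is in place, so no gap remains.
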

The proof of this theorem can be found at Section \ref{sec.thm.rectified.flow.proof}. 

Theorem \ref{thm.rectified.flow} states that $\vl(\cdot, \tilde{\rmZ}_{:, -t})$ transports samples from the initial distribution $p(\rvz_t| \tilde{\rmZ}_{:, -t})$ to the target $\prod_i p(\rz_{i,t}|  \tilde{\rmZ}_{i, -t})$, and as a result, reducing the KL divergence in \eqref{eq.kl.iterative} to zero. 

We provide the concrete SICA algorithm with WGF and Rectified Flow as de-mixing flow choices in Algorithm \ref {alg:gcd2}.

\section{EXPERIMENTS}
\label{sec.exp}
We now evaluate the empirical performance of SICA on both artificial and real-world datasets. For both flow variants (WGF and RF), the vector field $\vv$ is parameterized using a three-layer one-dimensional convolutional neural network (CNN) (\verb|torch.nn.Conv1d|) with 16 hidden channels, to process sequence inputs. 
We consider several linear and non-linear ICA variants: 
FastICA \citep{hyvarinen_fast_1999}, LICA \citep{suzuki_least-squares_2011}, Permutation-Contrastive Learning (PCL) \citep{hyvarinen_nonlinear_2017} and iVAE \citep{khemakhem_variational_nodate}. 
See the supplementary material for more implementation details and source code to reproduce experimental results. 

\subsection{Autoregressive Signals}
\begin{figure*}[t]
\centering
\subfigure[Mixed Signal (blue), Recovered Signal (red) and True Signal (black)]{
\includegraphics[width=.315\textwidth]{./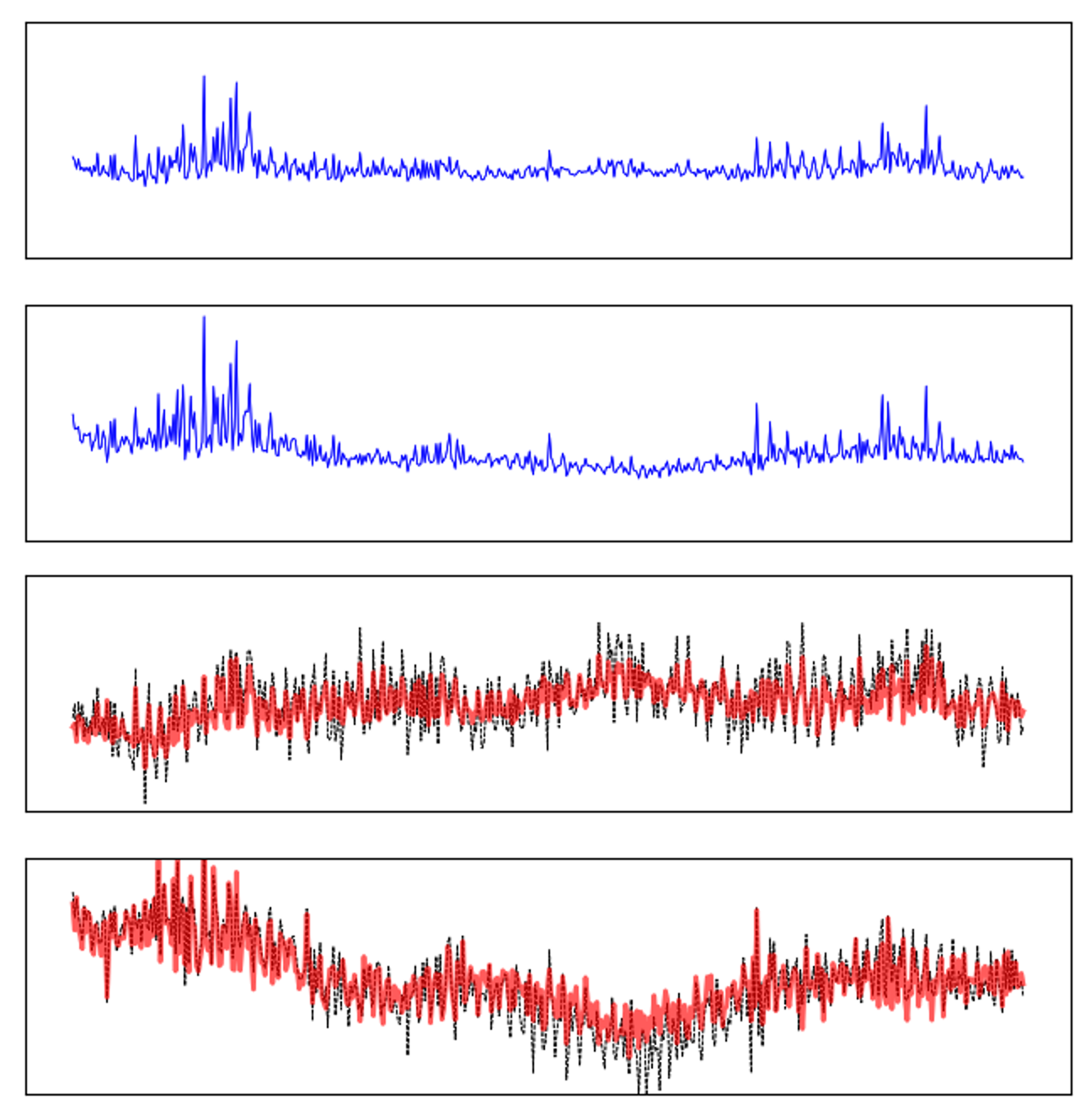}
\label{fig.mcc.demo}
}
\subfigure[Linear Mixing $\vf$ on AR (7)]{
\includegraphics[width=.315\textwidth]{./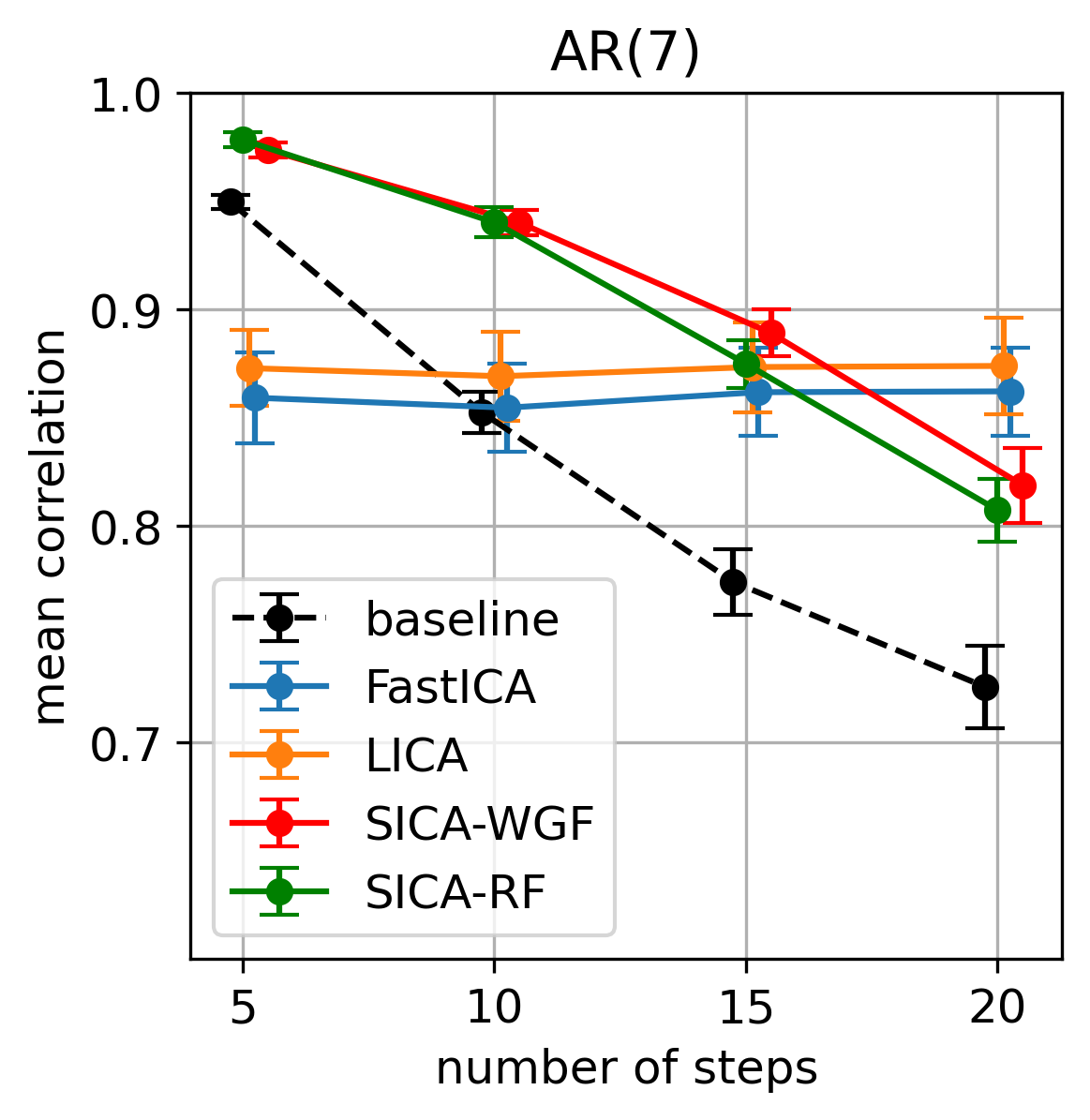}
\label{fig.mcc.linear}
}
\subfigure[Non-linear Mixing $\vf$ on AR (7)]{
\includegraphics[width=.315\textwidth]{./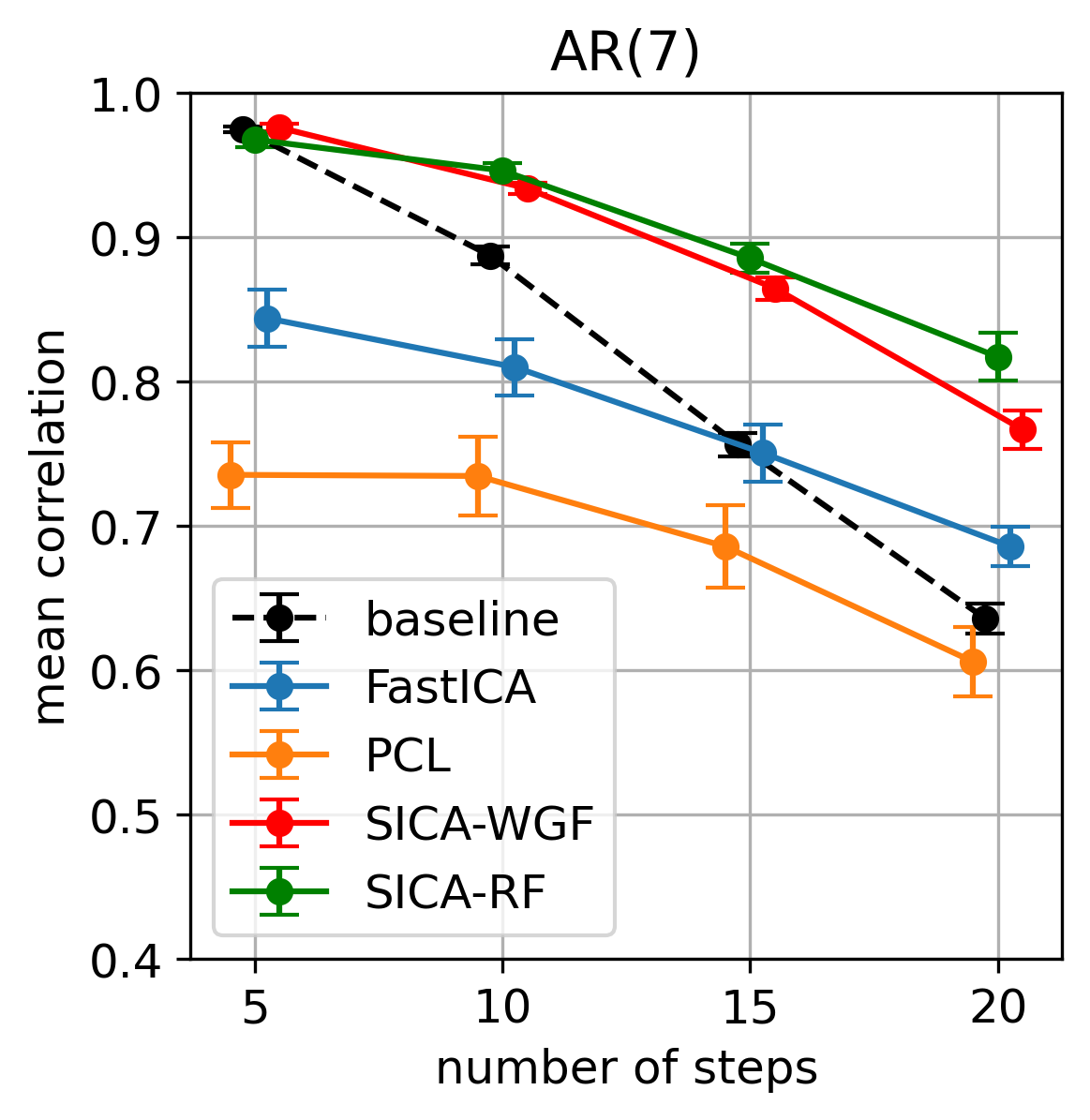}
\label{fig.mcc.nonlinear}
}

\caption{AR (7) dataset. Left: Illustrative Example. Middle and Right, MCC over various mixing steps. The higher the better. MCC is measured over 20 independent runs, and error bars represent the standard error. }
\end{figure*}

In this experiment, the true signals are generated independently using an autoregressive model, and are created using the following formula:
\begin{align*}
    \rs_{i, t} & = \sum_{k=1}^7 \beta_k \rs_{i, t-k} + \epsilon, ~~~ \epsilon \sim \mathcal{N}(0, 0.1^2), \\
    \boldsymbol{\beta} & = [4, 3, 2, 1, -0.5, 0.3, -0.2] \times 0.1. 
\end{align*}
$\rs_{i, 1} \dots \rs_{i, 7}$ are sampled from the standard normal distribution, and the mixing function $\vf$ is constructed as an iterative update on the true signal. At iteration $j$, we mix the signal with an update
\begin{align*}
    \rvx^{(j)} & = \rvx^{(j-1)} + \vh\left(\mW\rvx^{(j-1)}\right), \\
    \mW & \in \mathbb{R}^{d \times d}, \vh: \mathbb{R}^{d} \to \mathbb{R}^{d}. 
\end{align*}
We initialize $\rvx^{(0)}$ as $\rvs$. 
When $\vh$ is the identity function, the mixed signal $\rvx$ is a linear function of $\rvs$, given by 
$
\rvx = (\mI + \mW)^{J} \rvs,
$
where $J$ denotes the total number of mixing updates. In this section, we consider the setting where the number of signals is $d = 2$ and the sequence length is $T = 1024$. 
$\mW$ is specified with diagonal entries equal to $1$ and off-diagonal entries equal to $0.7$. 
We examine two scenarios: (i) $\vh$ as the identity function, which will be referred to as the \emph{linear} setting, and 
(ii) $\vh$ as the GELU activation, which will be referred to as the \emph{nonlinear} setting.

First, we perform 20 non-linear mixing updates and use SICA with RF to recover the original signal. We then illustrate the recovered signal in Figure \ref{fig.mcc.demo}, where we can see that the observed signals (blue) are almost visually indistinguishable due to the mixing function. However, the signal recovered by SICA (red) accurately traces the true signal (black). 

We further measure the performance using mean correlation coefficients (MCC) between the recovered signals and the ground truth. We plot MCC over various numbers of mixing steps ($J$). The larger $J$ is, the more mixed the signals are. 

Figure \ref{fig.mcc.linear} shows that in the linear setting, as the signals become more entangled, the MCC of SICA methods reduces. In contrast, the linear ICAs (FastICA, LICA) remain steady. This is not surprising as linear ICAs are designed to handle linear mixing functions. 
However, in three out of four cases, $J = 5, 10, 15$, SICA methods still outperforms FastICA, LICA and the baseline (MCC of the observed signal $\rvx$, marked as ``baseline'' on the plot), which shows that although our non-linear flows are not as robust as linear models, their performance is still strong in non-extreme cases. 

We now demonstrate the advantage of our method using the non-linear setting. In this setting, we also compare SICA with non-linear ICA variants (PCL and iVAE). 
Figure \ref{fig.mcc.nonlinear} shows that MCCs of all methods, including FastICA, reduce as the number of mixing steps increases. However, in this setting, SICA methods maintain a significant lead in MCC compared to FastICA, PCL, and the baseline. Note that iVAE cannot achieve an MCC above 0.4 for all mixing steps, so it is not plotted for better visualization of the other methods. 

\subsection{MNIST Images}
\begin{figure*}[h]
\centering
\subfigure[Mixed Signal (1st row), True Signal (2nd row), Signal Recovered by SICA-RF (3rd row) and by FastICA (4th row)]{
\centering
\includegraphics[width=.3\textwidth]{./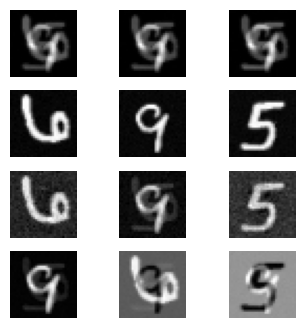}
\label{fig.mnist.demo}
}
\subfigure[Linear Mixing $\vf$ on MNIST]{
\includegraphics[width=.3\textwidth]{./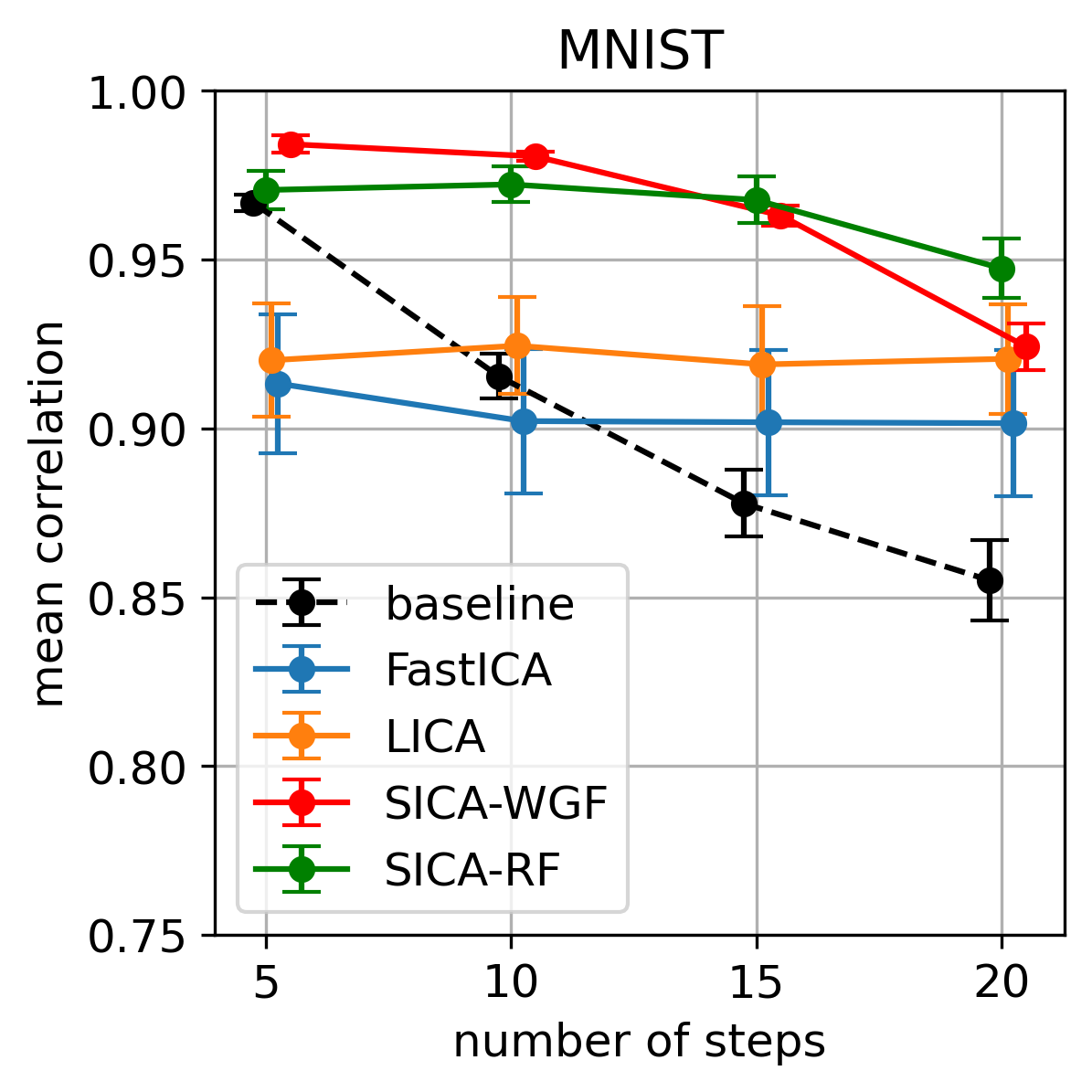}
\label{fig.mnist.linear}
}
\subfigure[Non-linear Mixing $\vf$ on MNIST]{
\includegraphics[width=.3\textwidth]{./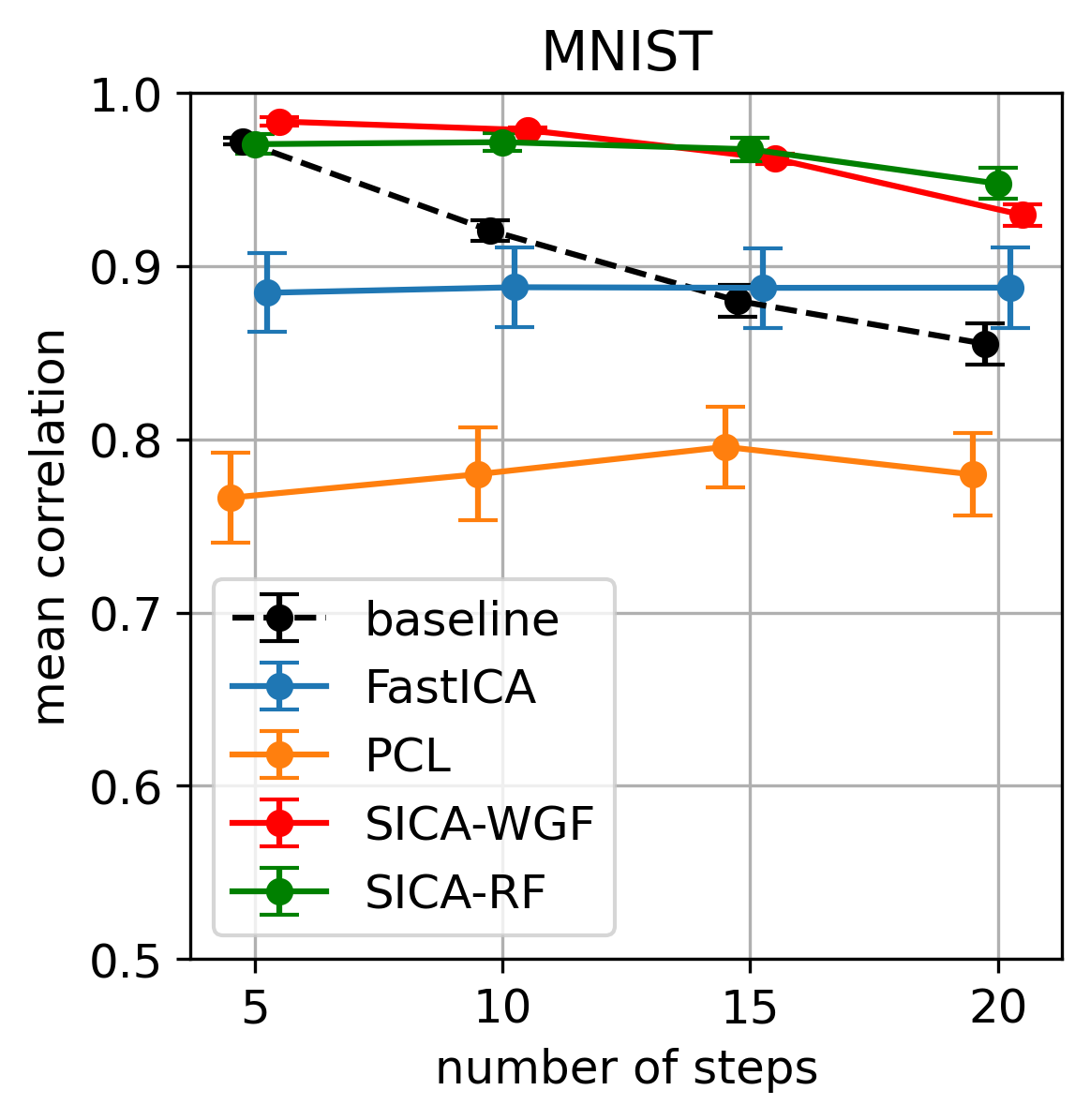}
\label{fig.mnist.nonlinear}
}
\caption{MNIST dataset. Left: Illustrative Example. Middle and Right: MCC over various mixing steps. MCC are measured over 20 independent runs and error bars represent the standard error. }
\end{figure*}
In this set of experiments, we evaluate the performance of ICA methods on the task of de-mixing overlaid images. We randomly select samples from the MNIST dataset, and flatten them as signals $\rvs \in \mathbb{R}^{d \times (32\times32)}$. Then apply the linear and non-linear mixing steps described in the previous section to obtain mixed signals $\rvx$. 

First, we randomly select three images ($d=3$), apply 20 steps of non-linear mixing. It can be seen that all observed channels are visually identical after mixing. We then use SICA to disentangle these images. It can be seen that SICA correctly disentangles three images as 6, 9, and 5. The separations given by FastICA are not as clear. 

One interesting observation is that the first recovered channel by FastICA has a white digit 9, but the third recovered channel has a darker digit 5. 
This reveals a limitation of linear ICA in image separation: most algorithms recover the sources only up to an arbitrary dimension-wise transformation. Such transformations are often difficult to correct. For example, without prior knowledge of whether the ground truth consists of white digits on a dark background or dark digits on a white background, it becomes impossible to resolve this ambiguity, resulting in suboptimal recovery. However, our proposed flow-based methods do not appear to suffer from this issue, and studying the identifiability of our flow-based models could be an interesting future direction. 

Finally, we evaluate the performance of SICA using MCC over multiple runs in Figures \ref{fig.mnist.linear} and \ref{fig.mnist.nonlinear}. In the linear setting, the results show that, across all mixing steps, SICA methods not only stay above the baseline but also achieve superior performance compared to all linear ICA methods. 
In the non-linear setting (Figure \ref{fig.mnist.nonlinear}), we compare the SICA with two non-linear ICA variants (PCL and iVAE). Note that iVAE again cannot achieve MCC above 0.55 for all mixing steps; thus, it is not plotted for better visualization of other methods. It can be seen that SICA again maintains the lead among all methods across all mixing steps. 

\section{CONCLUSIONS}
In this paper, we propose a novel ICA criterion, SICA, based on the sufficiency assumption: the identified signals should be self-sufficient, and other recovered signals do not help the reconstruction of missing data of that signal. We show how to leverage the factorization implied by this assumption, and design a KL divergence objective to disentangle signals. We propose using de-mixing flows to minimize the KL divergence iteratively and provide justification for their optimality.  Experiments conducted on both synthetic and real-world datasets yield promising results.

\bibliography{./main}

\clearpage
\appendix
\thispagestyle{empty}

\onecolumn
\aistatstitle{Self-sufficient ICA via KL Minimising Flows\\
Supplementary Materials}













\section{MISSING PROOFS}
\subsection{Proof of Equation~\ref{eq.aica}}
\label{app:proof_aica}
\begin{proof}
By the chain rule for conditional densities with a fixed ordering $1,\dots,d$, we have
\begin{equation}
    p(\rvs \mid \rmU) \;=\; \prod_{i=1}^d p\!\left(\rs_i \,\middle|\, \rvs_{1:i-1},\, \rmU\right).
\end{equation}
By the conditional independence assumption in Equation~\ref{eq.aica.independnece},
\begin{equation}
    \rs_i \;\perp\!\!\!\perp\; (\rvs_{-i},\rvu_{-i}) \,\big|\, \rvu_i,
\end{equation}
we have
\begin{equation}
    p\!\left(\rs_i \,\middle|\, \rvs_{1:i-1},\, \rmU\right) \;=\; p(\rs_i \mid \rvu_i).
\end{equation}
Substituting this back into the chain rule gives
\begin{equation}
    p(\rvs \mid \rmU) \;=\; \prod_{i=1}^d p(\rs_i \mid \rvu_i),
\end{equation}
as required.

The other direction is also true. 
Fix $i$. Using Bayes’ rule and \eqref{eq.aica},
\begin{align*}
p(\rs_i \mid \rvs_{-i}, \rmU)
&= \frac{p(\rs_i, \rvs_{-i} \mid \rmU)}{p(\rvs_{-i}\mid \rmU)}
 = \frac{p(\rvs\mid \rmU)}{\int p(\rvs\mid \rmU)\, ds_i} \\
&= \frac{ \big[p(\rs_i\mid \rvu_i)\prod_{j\neq i} p(\rs_j\mid \rvu_j)\big] }{
      \prod_{j\neq i} p(\rs_j\mid \rvu_j) \;\int p(\rs_i\mid \rvu_i)\, ds_i } \\
&= \frac{ p(\rs_i\mid \rvu_i)}{1}
 = p(\rs_i\mid \rvu_i),
\end{align*}
since $\int p(\rs_i\mid \rvu_i)\,ds_i=1$ (replace the integral by a sum in the discrete case).
Hence $\rs_i \ind (\rvs_{-i},\rvu_{-i}) \mid \rvu_i$. 


\end{proof}

\subsection{Proof of Theorem \ref{thm.kl.reducing}}
\label{them.kl.reducing}
\begin{proof}
    Let us rewrite 
    \begin{align}
        & \KL^{(j)}\left(\vl^{(j)}\right) - \KL^{(j-1)}\left(\vl^{(j-1)}\right) \\
        = & \underbrace{\KL^{(j)}\left(\vl^{(j)}\right) - 
        \KL^{(j-1)}\left(\vl^{(j)}\right)}_{A} +
        \underbrace{\KL^{(j-1)}\left(\vl^{(j)}\right) -
        \KL^{(j-1)}\left(\vl^{(j-1)}\right)}_{B}. \label{eq.a.and.b}
    \end{align}
    The optimization in \eqref{eq.kl.iterative} implies that $B \le 0$ as a result of optimization. We only need to prove that the $A$ is smaller than zero.  

    Recall, 
    \begin{align}
    \label{eq.kl.fact}
      \KL^{(j)}\left(\vl^{(j)}\right) := &\KL \left[ p\left(\rvz_t^{(j)} \vert \tilde{\rmZ}^{(j)}_{:, - t}\right) \Big\Vert \prod_{i=1}^{d} p\left(\rz_{i, t}^{(j)} \vert  \tilde{\rmZ}_{i, - t}^{(j)}\right)\right] \notag \\
      \KL^{(j-1)}\left(\vl^{(j)}\right) :=& \KL \left[ p\left(\rvz_t^{(j)} \vert \tilde{\rmZ}^{(j-1)}_{:, - t}\right) \Big\Vert \prod_{i=1}^{d} p\left(\rz_{i, t}^{(j-1)} \vert  \tilde{\rmZ}_{i, - t}^{(j-1)}\right)\right]. 
    \end{align}

    First, we notice that
    \begin{align}
    \label{eq.zj.invertible}
    p\left(\rvz_t^{(j)} \vert \tilde{\rmZ}^{(j-1)}_{:, - t}\right) & = p\left(\rvz_t^{(j)} \vert \tilde{\rmZ}^{(j)}_{:, - t}\right),    \notag \\
    p\left(\rz_{i, t}^{(j-1)} \vert  \tilde{\rmZ}_{i, - t}^{(j-1)}\right) &= p\left(\rz_{i, t}^{(j-1)}  \vert  \tilde{\rmZ}_{i, - t}^{(j)}\right),  
    \end{align}
    since $\tilde{\rmZ}_{:, -t}^{(j)}$ is $\tilde{\rmZ}_{:, -t}^{(j-1)}$ transformed via an invertible function $\vl$.  
    Apply the equalities in \eqref{eq.zj.invertible} to \eqref{eq.kl.fact} and expand both KL divergences. 
    Some algebra shows, \[A = \sum_i \left\{ \mathbb{E}_{p^{(j)}_i}\left[\log p\left(\rz_{i, t}^{(j-1)} \vert  \tilde{\rmZ}_{i, - t}^{(j)}\right)\right] - \mathbb{E}_{p^{(j)}_i}\left[\log p\left(\rz_{i, t}^{(j)} \vert  \tilde{\rmZ}_{i, - t}^{(j)}\right)\right] \right\} \le 0, \]
    due to Gibbs inequality. 
    $p^{(j)}_i$ is short for $p_i\left(\rz_{i, t}^{(j)} | \tilde{\rmZ}_{i, -t}^{(j)}\right)$. 
    Since both $A \le 0$ and $B \le 0$, we obtain the desired result from \eqref{eq.a.and.b}.  
\end{proof}

\subsection{Proof of Theorem \ref{thm.rectified.flow}}
\label{sec.thm.rectified.flow.proof}
\begin{proof}
    First, let us define a few symbols. 
    Let $\rvw = \vl(\rvz_t, \tilde{\rmZ}_{:, -t}, \tilde{\rmZ}'_{:, -t}) := \vy(0) + \int_{0}^1 \vv^*(\vy(\tau), \tilde{\rmZ}_{:, -t}, \tilde{\rmZ}_{:, -t}', \tau) \mathrm{d}{\tau}$, with $\vy(0)$ set to $\rvz_t$. 
    and define the alias $\vl(\rmZ, t) := \vl(\rvz_t, \tilde{\rmZ}_{:, -t}, \tilde{\rmZ}_{:, -t})$.  
    Due to Lemma \ref{lem2.2}, 
    \begin{align}
    \label{eq.fac1}
      p(\rvw| \tilde{\rmZ}_{:, -t}, \tilde{\rmZ}_{:, -t}') = p(\rvz'_t|  \tilde{\rmZ}_{:, -t}, \tilde{\rmZ}_{:, -t}').  
    \end{align}


    Since $\rvz'_t$ is independent of  $\tilde{\rmZ}_{:, -t}$ by construction, 
    \begin{align}
    \label{eq.fac2}
      p(\rvz'_t|  \tilde{\rmZ}_{:, -t}, \tilde{\rmZ}_{:, -t}') = p(\rvz'_t|  \tilde{\rmZ}_{:, -t}') 
      = \prod_i p(\rz'_{i,t} |  \tilde{\rmZ}_{i, -t}'), 
    \end{align}
    where the last equality is by the construction of $(\rvz'_t, \tilde{\rmZ}'_{:, -t})$ in the  Section \ref{sec.min.kl.wgf}. 
    Combining \eqref{eq.fac1} and the factorization rule in \eqref{eq.fac2}, we get the factorization 
    \begin{align}
    \label{eq.factor}
        p( \rvw| \tilde{\rmZ}_{:, -t}, \tilde{\rmZ}_{:, -t}') = \prod_i p(\rz'_{i,t} |  \tilde{\rmZ}_{i, -t}').
    \end{align}  
    Substitute $\tilde{\rmZ}_{i, -t}$ for $\tilde{\rmZ}'_{i, -t}$ and the independence of $\vl(\rmZ)$ and $\tilde{\rmZ}'_{:, -t}$, we can rewrite \eqref{eq.factor} as 
    \[ p(\vl(\rmZ)| \tilde{\rmZ}_{:, -t}, \tilde{\rmZ}'_{i, -t} = \tilde{\rmZ}_{i, -t}) = p(\vl(\rmZ)| \tilde{\rmZ}_{:, -t}) = \prod_i p(\rz'_{i,t} | \tilde{\rmZ}'_{i, -t} = \tilde{\rmZ}_{i, -t} ).
    \]
    Since $(\rz'_{i,t}, \tilde{\rmZ}'_{i, -t})$ is identically distributed as $(\rz_{i,t}, \tilde{\rmZ}_{i, -t})$ by construction in the  Section \ref{sec.min.kl.wgf}, the RHS $\prod_i p(\rz'_{i,t} | \tilde{\rmZ}'_{i, -t} = \tilde{\rmZ}_{i, -t} ) = \prod_i p(\rz_{i,t} | \tilde{\rmZ}_{i, -t})$ and we obtain the desired result. 
\end{proof}

\begin{lemma}
    \label{lem2.2}
    $$p(\vl(\rvz_t, \tilde{\rmZ}_{:, -t}, \tilde{\rmZ}'_{:, -t}) | \tilde{\rmZ}_{:, -t}, \tilde{\rmZ}'_{:, -t}) = p(\rvz'_t|  \tilde{\rmZ}_{:, -t}, \tilde{\rmZ}_{:, -t}'). $$
\end{lemma}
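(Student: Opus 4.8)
The plan is to lift the statement into the augmented state $\rvy$ used to define the rectified flow, apply the marginal-preservation guarantee of rectified flow (Theorem 3.3 of \citep{liu_flow_2022}) there, and then descend from a joint-law identity to the conditional one by exploiting the fact that the conditioning blocks are frozen along the flow.

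First I would recall the construction behind \eqref{eq.rec.obj}: the velocity $\vv^*$ is the rectified-flow field for the linear interpolation between $\rvy(0) = [\rvz_t, \tilde{\rmZ}_{:, -t}, \tilde{\rmZ}'_{:, -t}]$ and $\rvy(1) = [\rvz_t', \tilde{\rmZ}_{:, -t}, \tilde{\rmZ}'_{:, -t}]$. Since $\vv^*$ outputs only the $\rvz_t$-block (its codomain is $\mathbb{R}^{d}$) and the two endpoints agree on the remaining two blocks, the augmented velocity field is effectively $[\vv^*, \vzero, \vzero]$, so solving \eqref{eq.ode.vv_t} freezes $(\tilde{\rmZ}_{:, -t}, \tilde{\rmZ}'_{:, -t})$ and moves only $\rvz_t$, producing $\rvw := \vl(\rvz_t, \tilde{\rmZ}_{:, -t}, \tilde{\rmZ}'_{:, -t})$. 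Theorem 3.3 of \citep{liu_flow_2022} then guarantees that the time-$1$ flow map pushes the law of $\rvy(0)$ onto the law of $\rvy(1)$.

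Writing this marginal-preservation statement out in the augmented coordinates, and using that the conditioning blocks pass through unchanged, yields the joint-law identity
\[
  p(\rvw, \tilde{\rmZ}_{:, -t}, \tilde{\rmZ}'_{:, -t}) = p(\rvz_t', \tilde{\rmZ}_{:, -t}, \tilde{\rmZ}'_{:, -t}).
\]
Both sides carry the identical marginal over $(\tilde{\rmZ}_{:, -t}, \tilde{\rmZ}'_{:, -t})$ — these are literally the same random variables, held fixed by the flow — so dividing by that common marginal gives $p(\rvw \mid \tilde{\rmZ}_{:, -t}, \tilde{\rmZ}'_{:, -t}) = p(\rvz_t' \mid \tilde{\rmZ}_{:, -t}, \tilde{\rmZ}'_{:, -t})$, which is exactly the claim.

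The main obstacle I expect is the careful justification that Theorem 3.3 applies to this \emph{degenerate} coupling, in which part of the endpoint vector is shared between $\rvy(0)$ and $\rvy(1)$: the point to nail down is that the shared blocks receive zero target velocity in \eqref{eq.rec.obj} and are therefore genuinely invariant along the flow, so the push-forward acts as the identity on them. Once that invariance is secured, passing from the joint equality to the conditional one is routine, precisely because the conditioning marginal coincides on both sides.
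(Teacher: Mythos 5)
Your proof is correct, and it reaches the same conclusion as the paper by a genuinely different packaging of the same underlying fact. The paper does not apply Theorem 3.3 of \citet{liu_flow_2022} as a black box on an augmented state; it re-derives a \emph{conditional} version of the marginal-preservation result from scratch: taking a test function $h$, computing $\partial_\tau \mathbb{E}[h(\rvz_t(\tau)) \mid \tilde{\rmZ}_{:,-t}, \tilde{\rmZ}'_{:,-t}]$ in two ways, using that the minimizer of \eqref{eq.rec.obj} is the conditional expectation $\mathbb{E}[\rvz_t(1)-\rvz_t(0)\mid \rvz_t(\tau), \tilde{\rmZ}_{:,-t}, \tilde{\rmZ}'_{:,-t}]$, and integrating by parts to obtain the conditional continuity equation $\partial_\tau\rho_\tau = -\nabla\cdot[\vv^*\rho_\tau]$, whence $\rho_1 = p(\rvz_t'\mid \tilde{\rmZ}_{:,-t}, \tilde{\rmZ}'_{:,-t})$. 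Your route instead lifts to the augmented state, notes that the augmented velocity is $[\vv^*,\vzero,\vzero]$ because the endpoints share the conditioning blocks and the least-squares minimizer of \eqref{eq.rec.obj} is exactly the first block of $\mathbb{E}[\rvy(1)-\rvy(0)\mid\rvy(\tau)]$, invokes the unconditional theorem to get a joint-law identity, and divides by the common marginal of $(\tilde{\rmZ}_{:,-t},\tilde{\rmZ}'_{:,-t})$ --- a legitimate step since both sides carry literally the same conditioning variables. What your reduction buys is brevity and reuse of an off-the-shelf theorem; what the paper's direct derivation buys is self-containedness, since it never has to certify that the hypotheses of Theorem 3.3 survive the degenerate coupling on the augmented space, which is exactly the obstacle you flag. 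In both versions the regularity issues (existence of densities, uniqueness for the continuity equation) are handled only informally, so your argument sits at the same level of rigor as the paper's.
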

\begin{proof}
    This proof is a conditional version of the Marginal Preserving Theorem (Theorem 3.3) in \citep{liu_flow_2022}. 
    Let $\rho_\tau(\vz| \tilde{\rmZ}_{:, -t}, \tilde{\rmZ}'_{:, -t})$ be the conditional density function of the interpolation $\rvz_t(\tau)$. 
    First, we can see that for any $\vh(\cdot)$ that vanishes at the boundary of the domain, 
    \begin{align}
        \label{eq.lemma.e1}
        \partial_\tau \mathbb{E}[h(\rvz_t(\tau)) | \tilde{\rmZ}_{:, -t}, \tilde{\rmZ}'_{:, -t})] =  \int h(\vz) \partial_\tau p_\tau (\vz| \tilde{\rmZ}_{:, -t}, \tilde{\rmZ}'_{:, -t})) \mathrm{d}\vz. 
    \end{align}

    Second, since the interpolation $\rvz(\tau)$ is a deterministic function of $\rvz(0)$ and $\rvz(1)$, we can see that 
    \begin{align}
        \partial_\tau \mathbb{E}[h(\rvz_t(\tau)) | \tilde{\rmZ}_{:, -t}, \tilde{\rmZ}'_{:, -t})] & =  \mathbb{E}_{\vz(0), \vz(1)}[\partial_\tau h(\rvz_t(\tau)) | \tilde{\rmZ}_{:, -t}, \tilde{\rmZ}'_{:, -t}] ~~~~ \text{reparameterization trick} \\
        &= \mathbb{E}[\nabla^\top h(\rvz_t(\tau)) \partial_\tau \rvz_t(\tau) | \tilde{\rmZ}_{:, -t}, \tilde{\rmZ}'_{:, -t}] ~~~~ \text{law of unconscious statistician}\\
        &= \mathbb{E}[\nabla^\top h(\rvz_t(\tau)) \mathbb{E}[\partial_\tau  \rvz_t(\tau) | \rvz_t(\tau), \tilde{\rmZ}_{:, -t}, \tilde{\rmZ}'_{:, -t}] | \tilde{\rmZ}_{:, -t}, \tilde{\rmZ}'_{:, -t}] \\
        &= \mathbb{E}[\nabla^\top h(\rvz_t(\tau)) \mathbb{E}[\rvz_t(1) - \rvz_t(0) | \rvz_t(\tau), \tilde{\rmZ}_{:, -t}, \tilde{\rmZ}'_{:, -t}] | \tilde{\rmZ}_{:, -t}, \tilde{\rmZ}'_{:, -t}]\\
        &= \mathbb{E}[\nabla^\top h(\rvz_t(\tau)) \vv^*\left(\rvz_t(\tau), \tilde{\rmZ}_{:, -t}, \tilde{\rmZ}'_{:, -t}, \tau \right) | \tilde{\rmZ}_{:, -t}, \tilde{\rmZ}'_{:, -t}] ~~~ \text{optimal solution} \label{eq.optimal.sol}\\
        &= \int \nabla^\top h(\vz)\vv^*\left(\vz, \tilde{\rmZ}_{:, -t}, \tilde{\rmZ}'_{:, -t}, \tau\right) \rho_\tau(\vz| \tilde{\rmZ}_{:, -t}, \tilde{\rmZ}'_{:, -t}) \mathrm{d} \vz\\
        &= - \int h(\vz) \nabla \cdot \left[\vv^*\left(\vz, \tilde{\rmZ}_{:, -t}, \tilde{\rmZ}'_{:, -t}, \tau\right) \rho_\tau(\vz| \tilde{\rmZ}_{:, -t}, \tilde{\rmZ}'_{:, -t})\right]\mathrm{d} \vz, ~~~ \text{integration by parts} \label{eq.lemma.e2}
    \end{align}
    where \eqref{eq.optimal.sol} is due to the fact that the conditional expectation $\mathbb{E}[\rvz_t(1) - \rvz_t(0) | \rvz_t(\tau), \tilde{\rmZ}_{:, -t}, \tilde{\rmZ}'_{:, -t})]$ is the optimal solution of the training objective \eqref{eq.rec.obj}. 
    From both \eqref{eq.lemma.e1} and \eqref{eq.lemma.e2}, we can see that the continuity equation holds for $\vv^*$
    \begin{align}
        \partial_\tau \rho_\tau(\vz| \tilde{\rmZ}_{:, -t}, \tilde{\rmZ}'_{:, -t})) = - \nabla \cdot \left[\vv^*\left(\vz, \tilde{\rmZ}_{:, -t}, \tilde{\rmZ}'_{:, -t})\right) \rho_\tau(\vz| \tilde{\rmZ}_{:, -t}, \tilde{\rmZ}'_{:, -t})\right], \text{a.s.} 
    \end{align}
    Under common regularity conditions (continuity of $\rho_\tau, \vv^*$, and the law of $\rvz_t$ are fully supported), the equality holds everywhere. 
    It means an ODE defined by the vector field, with $\rvz_t(0)$ as the initial value, will have the same marginal distribution $\rho_\tau$ as the interpolation $\rvz_t(\tau)$ for all $\tau \in [0, 1]$.
    The fact that $\rho_1 = p(\rvz_t'| \tilde{\rmZ}_{:, -t}, \tilde{\rmZ}'_{:, -t})$ concludes the proof. 
\end{proof}



\section{EXPERIMENTS SETTINGS}

\subsection{Neural Network Architecture}
\label{sec.nn.struct}

For both SICA-RF and SICA-WGF, we use a three-layer Convolutional Neural Network (CNN) to model the vector field $\vv$. Since CNN cannot directly handle NaN values, we use masks to indicate the missing column in $\tilde{\rmZ}_t$. For a $d$-dimensional sequence $\mZ$ with length $T$, the inputs to our neural network are three $d \times T$ matrices: $[\mM_t, \mZ_t, \tilde{\mZ}_{:, -t}]$, 
\begin{itemize}
    \item $\mM_t$ is a mask matrix whose elements are all zeros except the $t$-th column are set to ones. 
    \item $\mZ_t$ is a matrix whose columns are $\vz_t$ repeated $T$ times. 
    \item $\tilde{\mZ}_{:, -t}$ is $\mZ$ but its $t$-th column is replaced with an arbitrary constant (e.g., zero). 
\end{itemize}
These inputs are fed to the CNN block as a $3d$-channel sequence of length $T$. 

The CNN block of our neural network contains three layers, each with 16 hidden channels, followed by a linear projection layer. It projects the output from the CNN block to a $\mathbb{R}^d$ vector, as the output of $\vv$. 

Using \verb|torch.nn| PyTorch package, the neural network is defined as 

\begin{verbatim}
        self.net = nn.Sequential(
            nn.Conv1d(3*d, 16, kernel_size=3, padding=1),
            nn.ReLU(),
            nn.Conv1d(16, 16, kernel_size=3, padding=1),
            nn.ReLU(),
            nn.Conv1d(16, 16, kernel_size=3, padding=1), 
            nn.Flatten(startdim = 1)
        )

        self.fc = nn.Linear(hidden_channels * T, d)
\end{verbatim}

Note that for SICA-RF, we need to add another three matrices representing $\tilde{\rmZ}'_{:, -t}$ and an additional time-encoding to the network to reflect that $\vv$ is a time-varying function. 

This neural network can be easily modified to handle 2-dimensional indices of $t$, by reshaping the inputs as $d \times \sqrt{T} \times \sqrt{T}$ tensors (assuming the sequences are flattened $\sqrt{T} \times \sqrt{T}$ images) and change \verb|nn.Conv1d| to \verb|nn.Conv2d|. 

\subsection{Hyperparameters of the Proposed Method}

In addition to the neural network model, which we detailed in Section \ref{sec.nn.struct}, the proposed method has only a few other parameters. 

For the SICA-WGF method, we train the density ratio estimator using the Adam optimizer with a learning rate of 0.00001, with a batch size of 100, and run for 10 epochs. This setting is kept for all experiments in our paper. 

For the SICA-RF method, we train the rectified flow using Adagrad optimizer with a learning rate of 0.00001, with a batch size of 100, and run for 100 epochs. In the sampling stage, we set the number of Euler steps to be 100. 
This setting is kept for all experiments in our paper. 

The Algorithm \ref{alg:gcd2} requires a maximum number of iterations ($J$). For SICA-WGF, we set it to 10, and for SICA-RF, we set it to 30 for AR (7) and 20 for MNIST data.

\subsection{Implementations of Baseline Methods}
\begin{itemize}
    \item FastICA: We use the implementation provided by  \verb|sklearn.decomposition.FastICA|. 
        \begin{itemize}
            \item We use the default hyperparameters provided with the implementation and set the maximum iteration to 20000. 
        \end{itemize} 
    \item Least-squares ICA: We use the implementation provided by the original authors \url{https://ibis.t.u-tokyo.ac.jp/suzuki/software/LICA/}. 
        \begin{itemize}
            \item We provide a list of kernel bandwidth $\sigma = [0.1, 0.5]$ and regularization parameters $\lambda = [0.01, 0.1]$ for the method to select using the built-in cross-validation procedure. We set the number of basis functions for the RBF kernel to 50 to speed up the computation. 
        \end{itemize}
    \item iVAE: We use the implementation provided by the original authors \url{https://github.com/ilkhem/icebeem/tree/master/models/ivae}. 
    \begin{itemize}
        \item We use the default hyperparameters provided with the implementation, except the number of epochs was changed from 7e4 to 2e4, which does not seem to impact the performance but saves a significant amount of time. 
    \end{itemize}
    \item Permutation Contrastive Learning (PCL): We implemented it ourselves, and were inspired by \url{https://github.com/RyanBalshaw/Nonlinear_ICA_implementations}. 
    \begin{itemize}
        \item We tried various settings of neural network structures and settled on an MLP with two hidden layers and 256 neurons in each layer for the most reliable performance. 
    \end{itemize}

    The code for these methods can be found under the \verb|competitors| folder in the attached code repository. 
\end{itemize}

\subsection{Reproducing Results}

To reproduce Illustrative results, please run: \verb|demo_heart.py, demo_AR7.py, demo_MNIST.py, demo_rep_AR7.py| and \verb|demo_rep_MNIST.py|. 

\end{document}